\documentclass[11pt]{article}
\usepackage[utf8]{inputenc}
\pdfoutput=1


\setlength{\parindent}{0em}
\setlength{\parskip}{0.8em}

\title{Masked prediction tasks: a parameter identifiability view}
\author{
    Bingbin Liu\thanks{Carnegie Mellon University, \href{bingbinl@cs.cmu.edu}{bingbinl@cs.cmu.edu}} \and 
    Daniel Hsu \thanks{Columbia University, \href{djhsu@cs.columbia.edu}{djhsu@cs.columbia.edu}} \and 
    Pradeep Ravikumar \thanks{Carnegie Mellon University, \href{pradeepr@cs.cmu.edu}{pradeepr@cs.cmu.edu}} \and
    Andrej Risteski \thanks{Carnegie Mellon University, \href{aristesk@andrew.cmu.edu}{aristesk@andrew.cmu.edu}}
}
\date{\vspace{-1em}}

\usepackage{amsmath}
\usepackage{amssymb}
\usepackage{amsthm}
\usepackage[ruled]{algorithm2e}
\usepackage{graphicx}
\usepackage{bm,bbm}

\usepackage{listings}
\usepackage[dvipsnames]{xcolor}
\usepackage[colorlinks = true,
            linkcolor = blue,
            urlcolor  = blue,
            citecolor = blue,
            anchorcolor = blue]{hyperref}
\usepackage{geometry}
\usepackage{enumitem}
\usepackage{url}
\usepackage{fullpage}

\usepackage{caption}
\usepackage{subcaption}

\usepackage{breakcites}
\usepackage{breakurl}

\usepackage[bottom]{footmisc}
\usepackage{natbib}
\usepackage{romannum}
\def \rnum#1{(\romannum{#1})}
\def \eqLabel#1{\overset{(\romannum{#1})}}

\usepackage{environ}

\newif\ifsolution
\solutiontrue
\newcounter{solnequation}
\newcounter{solneqtmp}
\NewEnviron{soln}{
    \setcounter{solneqtmp}{\value{equation}}
    \setcounter{equation}{\value{solnequation}}
    
    \ifsolution\color{red}\expandafter\textbf{Solution} \BODY\fi%
    \setcounter{solnequation}{\value{equation}}
    \setcounter{equation}{\value{solneqtmp}}
    }




\newtheorem{theorem}{Theorem}
\newtheorem*{namedtheorem}{\theoremname}
\newcommand{\theoremname}{testing}

\newtheorem{lemma}{Lemma}
\newtheorem*{lemma*}{Lemma}

\newtheorem{claim}{Claim}
\newtheorem*{claim*}{Claim}

\newtheorem{proposition}[theorem]{Proposition}

\newtheorem*{question*}{Question}

\theoremstyle{definition}

\newtheorem*{definition*}{Definition}

\newtheorem*{remark*}{Remark}

\theoremstyle{plain}

\newtheorem{assumption}{Assumption}












\def\eqref#1{equation~\ref{#1}}









\def\1{\bm{1}}








\def\ve{{{e}}}

\def\vu{{{u}}}
\def\vv{{{v}}}

\def\vx{{{x}}}


\def\mA{{{A}}}
\def\mB{{{B}}}
\def\mC{{{C}}}
\def\mD{{{D}}}

\def\mH{{{H}}}
\def\mI{{{I}}}

\def\mM{{{M}}}

\def\mO{{{O}}}
\def\mP{{{P}}}

\def\mR{{{R}}}

\def\mT{{{T}}}
\def\mU{{{U}}}

\def\mW{{{W}}}
\def\mX{{{X}}}

\DeclareMathAlphabet{\mathsfit}{\encodingdefault}{\sfdefault}{m}{sl}
\SetMathAlphabet{\mathsfit}{bold}{\encodingdefault}{\sfdefault}{bx}{n}

\def\gA{{\mathcal{A}}}

\def\gH{{\mathcal{H}}}

\def\gP{{\mathcal{P}}}

\def\gX{{\mathcal{X}}}










\newcommand{\E}{\mathbb{E}}

\newcommand{\R}{\mathbb{R}}

\newcommand{\softmax}{\mathrm{softmax}}




\newcommand\States{\mathcal{H}}
\newcommand\Obs{\mathcal{X}}

\def \GHMM {\ensuremath{\text{G-HMM}}\xspace}

\def \det {\text{det}}
\def \diag {\text{diag}}
\def \rank {\text{rank}}

\def \mTrans {\mT}
\def \mEmiss {\mO}
\def \mMeans {\mM}


\def \xpred {\vx^{\text{pred}}}
\def \xcond {\vx^{\text{cond}}}





\def \path {\mathsf{path}}



\def \Tensor {\mW}
\def \tRank {R} 
\def \kRank {r} 

\def \tOne {{t_1}}
\def \tTwo {{t_2}}
\def \tThr {{t_3}}

\def \ei {e_i}
\def \ej {e_j}

\def \softmax {\text{softmax}}
\def \simplex {\Delta}


\usepackage{times}



\setlength{\parindent}{0em}
\setlength{\parskip}{0.8em}

\def\coltSpace{1}

\begin{document}

\maketitle


\pagenumbering{arabic}

\begin{abstract}%
  
  The vast majority of work in self-supervised learning, both theoretical and empirical (though mostly the latter), have largely focused on recovering good features for downstream tasks \citep{Saunshi20LM,wei2021pretrained}, with the definition of ``good'' often being intricately tied to the downstream task itself.  This lens is undoubtedly very interesting, but suffers from the problem that there isn't a ``canonical'' set of downstream tasks to focus on---in practice, this problem is usually resolved by competing on the benchmark dataset du jour. 
  
  In this paper, we present an alternative lens: one of parameter identifiability. More precisely, we consider data coming from a parametric probabilistic model, and train a self-supervised learning predictor with a suitably chosen parametric form. Then, we ask whether we can read off the ground truth parameters of the probabilistic model from the optimal predictor. We focus on the widely used self-supervised learning method of predicting masked tokens, which is popular for both natural languages~\citep{bert} and visual data~\citep{MAE21}.
  
  While incarnations of this approach have already been successfully used for simpler probabilistic models (e.g. learning fully-observed undirected graphical models \citep{ravikumar2010high}), we focus instead on latent-variable models capturing sequential structures---namely Hidden Markov Models with both discrete and conditionally Gaussian observations. We show that there is a rich landscape of possibilities, out of which some prediction tasks yield identifiability, while others do not. Our results, borne of a theoretical grounding of self-supervised learning, could thus potentially beneficially inform practice. Moreover, we uncover close connections with uniqueness of tensor rank decompositions---a widely used tool in studying identifiability through the lens of the method of moments \citep{anandkumar2012multiview}.  
%
%
\end{abstract}

\section{Introduction}

\textcolor{gray}{// self-supervised learning and masked prediction}

Self-supervised learning refers to the type of learning where the training signals come from inherent properties of the data, without  requiring explicitly curated labels.
As a generic learning paradigm, self-supervision is extremely versatile and can be applied to a range of tasks.
A good-performing method is the masked prediction task, where the model is trained to predict the missing part of a sample given the rest of it.
This strategy has been widely adopted in empirical works.
Notable early works include word2vec~\citep{word2vec}, which uses continuous bag-of-word and skip-gram for learning word embeddings.
More recently, large-scale experiments enabled by architectural progress such as the invention of Transformers~\citep{vaswani2017attention} have shown great progress in processing both natural languages and visual data, where the models are able to produce decent representations that can perform well on a wide range of downstream tasks~\citep{bert,GPT2,MAE21}



\textcolor{gray}{// representation learning relies on downstream tasks; this work focuses on parameter recovery}

The majority of the current self-supervised learning research have been focusing on representation learning, where the progress is measured by the performance of downstream tasks.
However, using downstream performance as a quality measure can be suboptimal, since different downstream tasks may favor different properties of the representation \textcolor{grass}{(cite?)}, and it is unclear which task should be chosen.
We argue that a better alternative is to focus on learning the data distribution itself.
Note that using self-supervised learning for distribution learning is not unreasonable;
for instance, early work by \cite{nce} has proposed to contrast the data distribution with a known noise distribution as a less computationally intensive alternative of the classical maximum likelihood estimation.


\textcolor{gray}{// parameter recovery for latent-variable sequence models}

In this work, we focus on the \emph{masked prediction task} and its application to parameter recovery of latent-variable generative models.
One advantage of using a generative model is that there is readily a notion of ground truth,
and it makes sense to ask whether and when the learned models are able to recover these ground truth parameters.
Masked prediction fits naturally to this objective and has in fact been a classical approach to learning probabilistic models.
For instance, \cite{ravikumar2010high} used pseudo-likelihood to learn Markov random fields via regularized prediction of a node given its neighbors \textcolor{grass}{(more cites)}.
In contrast to prior works which mostly focus on fully observed models,
we tackle this question for the more complicated latent-variable models with time-series structure.
We study parameter recovery under two setups, one being the standard hidden Markov model with discrete latent and observables, and the other being a model also with a hidden Markov structure but the observables are conditionally Gaussian given the latent.

The contributions of this work are two-fold.
First, as a conceptual contribution, we identify a close connection between identifiability via self-supervised learning, and identifiability of model parameters from low-rank tensor decomposition~\citep{kruskal77,allman2009identifiability}.
The latter is a classical topic in theoretical computer science, and the basis of many efficient algorithms for latent-variable model learning~\citep{anandkumar2012multiview,anandkumar2014tensor}.
As a second contribution, we provide a complete characterization of the type of prediction tasks sufficient for identifiability. In particular, we analyze tasks involving up to three tokens, in any order and configuration.
Specifically, we show that: \textcolor{Orange}{(to be updated)}
\vspace{-1em}
\begin{itemize}[leftmargin=*]
\item In both HMMs and state space models, tasks involving only 2 tokens (i.e. predicting $x_t | x_{t-1}$ and/or $x_t|x_{t+1}$) are insufficient to identify the parameters of the model.

\item In both HMMs and state space models, tasks involving 3 tokens (i.e. 
$x_S | x_{\{t-1,t,t+1\} \setminus S}$ or $ x_{\{t-1,t,t+1\} \setminus S} | x_S$
for any $S \subseteq \{t-1,t,t+1\}, |S| = 2$)   
\end{itemize} 
We emphasize that the identifiability results require the predictor to have a correct parametric form.
\textcolor{gray}{todo: take into account the structure of data, contrasting this with merely treating them as sufficient statistics.}

\subsection{Related works}

\paragraph{Self-supervised applications}

Concurrent to the advances in NLP, there has also been a flourishing line of work in visual representation learning.
Pioneering works include designing tasks to utilize spatial relations \citep{relPos15,jigsaw16}, image inpainting \citep{inpaint16}, and leveraging global properties such as colorization \cite{color16} or image orientation \citep{rotation18}.
Classic ideas such as contrast or masked prediction also prove helpful in their modern realizations~\citep{CPC,CPCv2,simclr,iGPT,MAE21}, constantly pushing the state of the art performance and even surpassing supervised pretraining in various aspects \textcolor{grass}{(cite)}.

\paragraph{Evaluating self-supervised models}
The most common quality measure for self-supervised representation learning is the performance of downstream tasks.
Empirically, it is common to evaluate representations on their transfer performance, such as evaluating visual representations on objection detection and segmentation~\citep{inpaint16,simclr}, or evaluating language embeddings on various semantic and syntactic tasks such as question answering~\citep{bert,GPT2} \textcolor{grass}{(cite more)}.
On the theoretical front, analyses are available both on masked predictions~\citep{predictLee20,promptWei21} and contrastive methods~\citep{arora2019theoretical,multiViewTosh20,topicTosh21,WangIsola20,Zixin20}.
In contrast, few work has analyzed using self-supervised methods for parameter recovery.
While the learned representations in \cite{Zixin20} also reveal the model parameters, they used a sparse-coding model for the data and a contrastive objective as the self-supervised task, both of which are very different from our setting.

\paragraph{Latent variable models and tensor methods}
\textcolor{gray}{\textcolor{Orange}{todo}: nonlinear ICA (Hyvarinen); Daniel's works; Geoff Gordon's paper.}

\vspace{-0.6em}
\section{Setup}
\label{sec:setup}

This work focuses on two classes of latent-variable sequence models. The first are fully discrete hidden Markov models (HMMs), and the second are HMMs whose observables marginally follow a mixtures of Gaussians with identity covariance.
We will denote the observations and hidden states by $\{x_t\}_{t\geq 1}$ and $\{h_t\}_{t \geq 1}$, respectively.
The hidden states $h_1 \to h_2 \to \dotsb$ form a Markov chain, and conditional on $h_t$, the observable $x_t$ is independent of all other variables.
Throughout, we refer to the observations $\{x_t\}_{t\geq 1}$ as tokens, following the nomenclature from language models.

\subsection{Discrete Hidden Markov Model}

We first describe the parameterization of (discrete) HMMs.
Let $\Obs$ denote the observation space with $d := |\Obs|$, and let $\States$ be the state space with $k := |\States|$.
To simplify notation, let $\Obs = \{e_1,\dotsc,e_d \} \subset \R^d$ and $\States = \{e_1,\dotsc,e_k \} \subset \R^k$, where $e_i$ is the $i^{\text{th}}$ vector in the standard basis, i.e., $[e_i]_i = 1$, $[e_i]_j = 0$ for $j \neq i$.
The parameters of interest are the \textit{transition matrix} $\mTrans \in \R^{k \times k}$ and the \textit{emission matrix} $\mEmiss \in \R^{d \times k}$.
With $\mEmiss, \mTrans$, the dynamics of the model can be written as:
\begin{equation*}
  \E[h_{t+1} \mid h_t = e_i] = \mTrans e_i, \qquad
  \E[x_t \mid h_t = e_i] = \mEmiss e_i .
\end{equation*}

\subsection{Conditionally-Gaussian HMM (\GHMM)}


We next describe the parameterization of \emph{conditionally-Gaussian HMMs ({\GHMM}s)}.
The state space $\gH := \{e_1, \dotsc, e_k\} \subset \R^k$ is the same as in the discrete case, while the observation space is now continuous with $\gX := \R^d$.
The parameters of interest are the transition matrix $\mTrans \in \R^{k \times k}$, and the means of the $k$ identity-covariance Gaussians $\{\mu_i\}_{i \in [k]} \subset \R^{d}$; that is, $P(x_t = x \mid h_t = \ve_i) = (2\pi)^{-\frac{d}{2}} \exp(-\|x-\mu_i\|^2/2)$.
We let $\mMeans := [\mu_1, \dotsc, \mu_k] \in \R^{d \times k}$ be the matrix whose columns are the conditional means, so we can write the dynamics of the model as:
\begin{equation*}
    \E[h_{t+1} \mid h_t = e_i] = \mTrans e_i, \qquad
    \E[x_t \mid h_t = e_i] = \mMeans e_i = \mu_{i}.
\end{equation*}

\subsection{Masked prediction tasks}

We are interested in the (regression) task of predicting one or more ``masked out'' tokens as a function of another observed token, with the goal of minimizing expected squared loss under a distribution given by an HMM or \GHMM.
For example, we may want to predict the token $x_2$ from $x_1$, or predict the tensor product of tokens $x_1 \otimes x_3$ from $x_2$.
Since the objective is the expected squared loss, the optimal predictor of the tensor product of tokens $\otimes_i \xpred_i$ from another token $\xcond$ is provided by the conditional expectation operator:
\begin{equation*}
    f(\xcond) 
    = \E[\otimes_i \xpred_i  \mid \xcond] .
\end{equation*}
For instance, consider the case of predicting $x_2$ given $x_{1}$ under the HMM with parameters $(\mEmiss,\mTrans)$.
The optimal predictor, denoted by $f^{2|1}$, can be written in terms of $(\mEmiss,\mTrans)$ as
\begin{equation*}
\begin{split}
  &f^{2|1}(x)
  = \E[x_2 \mid x_1] = \sum_{i \in [k]} \E[x_2 \mid h_2=i] P(h_2=i \mid x_1)
  \\
  =& \sum_{i \in [k]} \sum_{j \in [k]} \E[x_2 \mid h_2=e_i] P(h_2=e_i \mid h_1=e_j) \underbrace{P(h_1=e_j \mid x_1)}_{:= [\phi(x_1)]_j}
  = \sum_{i \in [k]} \sum_{j \in [k]} \mEmiss_{i} \mTrans_{ij} \underbrace{\frac{\mEmiss_{x_1,j}}{\sum_{l \in [k]} \mEmiss_{x_1,l}}}_{:= [\phi(x_1)]_j} .
\end{split}
\end{equation*}
Here with a slight abuse of notation, we used $x_1$ to index the matrix $\mEmiss$, where $x_1$ is treated as $x_1 \in [d]$ when being used as an index.
We will use $\phi: \R^d \rightarrow \R^k$ to denote the posterior distribution of a hidden state $h_t$ given the corresponding observation $x_t$, i.e., $\phi(x_t) = \E[h_t  \mid  x_t]$.

Our goal is to study the parameter identifiability from the prediction tasks, when the predictors have the correct parametric form.
Formally, we define identifiability from a prediction task as follows:
\begin{definition*}[Identifiability from a prediction task, HMM]
  A prediction task suffices for identifiability if, for any two HMMs with parameters $(\mEmiss,\mTrans)$ and $(\tilde\mEmiss,\tilde\mTrans)$, equality of their optimal predictors for this task implies that there is a permutation matrix $\Pi$ such that $\mEmiss = \tilde\mEmiss \Pi$ and $\mTrans = \tilde\mTrans \Pi$.
\end{definition*}
In other words, the mapping from the equivalence classes of HMMs to optimal predictors for a task is injective, where equivalence is considered up to a shared column-wise permutation of the parameter matrices $O$ and $T$.
By identifiability from a collection of prediction tasks, we refer to the injectiveness of the mapping from equivalence classes of HMMs to collections of optimal predictors for all such tasks.
Identifiability for {\GHMM}s is defined analogously with $\mEmiss,\tilde\mEmiss$ changed to $\mMeans,\tilde\mMeans$.

\subsection{Tensor decomposition}

Our identifiability results tie closely to the uniqueness of \emph{tensor rank decompositions}~\citep{hitchcock1927expression}.
An \emph{order-$t$ tensor} (or \emph{$t$-tensor}) is an $t$-way multidimensional array; a matrix is a 2-tensor.
The \textit{tensor rank} of a tensor $\mW$ is the minimum number $\tRank$ such that $\mW$ can be written as a sum of $\tRank$ rank-1 tensors.
That is, if a $t$-tensor $\mW$ has rank-$\tRank$, it means that $\mW = \sum_{i \in [r]} \otimes_{j \in [t]} \mU_i^{(j)}$ for some matrices $\mU^{(j)} \in \R^{n_j \times r}$, where $U_i^{(j)}$ denotes the $i^{\text{th}}$ column of matrix $\mU^{(j)}$.

In this work, we only need to work with 3-tensors of the form $\mW = \sum_{i \in [\tRank]} A_i \otimes B_i \otimes C_i$ for some matrices $\mA \in \R^{n_1 \times \tRank}$, $\mB \in \R^{n_2 \times \tRank}$, $\mC \in \R^{n_3 \times \tRank}$, as 3-tensors will suffice for identifiability in all of our settings of interest. 
\footnote{To apply our results on higher order tensors, one can consider an order-3 slice of the higher order tensor.}
A classic work by \cite{kruskal77} gives a sufficient condition under which $\mA, \mB, \mC$ can be recovered up to column-wise permutation and scaling.
The condition is stated in terms of the \textit{Kruskal rank}, which is the maximum number $\kRank$ such that every $\kRank$ columns of the matrix are linearly independent.
Let $k_A$ denote the Kruskal rank of matrix $\mA$, then:
\begin{proposition}[Kruskal's theorem, \cite{kruskal77}]
\label{prop:kruskal}
  The components $\mA, \mB, \mC$ of a 3-tensor $\mW := \sum_{i\in[\tRank]} \mA_i \otimes \mB_i \otimes \mC_i$ are identifiable up to a shared column-wise permutation and column-wise scaling if $k_A + k_B + k_C \geq 2\tRank + 2$.
\end{proposition}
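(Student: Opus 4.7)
The plan is to follow the classical route via slice-wise analysis and Kruskal's permutation lemma. Suppose we have two rank-$\tRank$ decompositions
\[
\mW \;=\; \sum_{i=1}^{\tRank} \mA_i \otimes \mB_i \otimes \mC_i \;=\; \sum_{i=1}^{\tRank} \tilde{\mA}_i \otimes \tilde{\mB}_i \otimes \tilde{\mC}_i,
\]
each satisfying the Kruskal rank hypothesis, and we must produce a permutation $\pi$ of $[\tRank]$ and nonzero scalars $\alpha_i, \beta_i, \gamma_i$ with $\alpha_i\beta_i\gamma_i = 1$ such that $\tilde{\mA}_i = \alpha_i \mA_{\pi(i)}$, $\tilde{\mB}_i = \beta_i \mB_{\pi(i)}$, $\tilde{\mC}_i = \gamma_i \mC_{\pi(i)}$.

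First I would reduce to a question about matrices. The $k$-th frontal slice of $\mW$ (fixing the third index) is
\[
W_k \;=\; \sum_i \mC_{k,i}\, \mA_i \mB_i^\top \;=\; \mA\, \mathrm{diag}(\mC_{k,:})\, \mB^\top \;=\; \tilde{\mA}\, \mathrm{diag}(\tilde{\mC}_{k,:})\, \tilde{\mB}^\top .
\]
For any vector $v$, the weighted combination $W(v) := \sum_k v_k W_k$ equals $\mA\, \mathrm{diag}(\mC^\top v)\, \mB^\top = \tilde{\mA}\, \mathrm{diag}(\tilde{\mC}^\top v)\, \tilde{\mB}^\top$. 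The key numerical identity is that whenever $\mC^\top v$ has at most $\min(k_\mA, k_\mB)$ nonzero entries, the rank of $W(v)$ equals exactly $|\mathrm{supp}(\mC^\top v)|$; this follows from the Kruskal-rank definition, since any such subset of columns of $\mA$ and of $\mB$ is linearly independent. The same holds on the tilde side, so whenever one support is small, the two ranks must match.

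The combinatorial core, Kruskal's permutation lemma, then kicks in. Using the inequality $k_\mA + k_\mB + k_\mC \geq 2\tRank + 2$, one shows that for every $v$ for which $\tilde{\mC}^\top v$ has small support $s$, the support of $\mC^\top v$ also has size $s$, and moreover there is a bijection between supports as $v$ varies. From this one deduces that each column of $\tilde{\mC}$ must be a scalar multiple of some column of $\mC$: if column $\tilde{\mC}_j$ were a linear combination of two or more columns of $\mC$, one could construct a $v$ producing mismatched supports, contradicting the rank equality. A careful pigeonhole / counting argument (matching the $\tRank$ columns on each side while using that no $k_\mC$ columns are dependent) then gives a genuine permutation with column scalings, so $\tilde{\mC} = \mC \mPerm\, \mathrm{diag}(\gamma)$ for some permutation $\mPerm$ and nonzero $\gamma$.

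Finally I would propagate the conclusion to $\mA$ and $\mB$. Once $\tilde{\mC}$ is known to match $\mC$ up to permutation and scaling, substituting back into $W(v) = \mA\,\mathrm{diag}(\mC^\top v)\,\mB^\top = \tilde{\mA}\,\mathrm{diag}(\tilde{\mC}^\top v)\,\tilde{\mB}^\top$ and choosing $v$ so that exactly one diagonal entry is nonzero (possible because $\mC$ has Kruskal rank at least $1$ and after the permutation its columns align with those of $\tilde{\mC}$) forces $\tilde{\mA}_i \tilde{\mB}_i^\top$ to be a scalar multiple of $\mA_{\pi(i)} \mB_{\pi(i)}^\top$ for each $i$; since these are rank-one matrices, each factor is determined up to scalar, and the three scalars multiply to $1$ since their outer product reproduces the same rank-one tensor component. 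The main obstacle is unquestionably the permutation lemma in the middle paragraph: translating "matching supports for every test vector $v$" into "columns match up to permutation and scaling" is the subtle step where the precise inequality $k_\mA + k_\mB + k_\mC \geq 2\tRank + 2$ is used, and where Kruskal's original argument required a careful inductive/geometric counting. Everything before and after is essentially linear-algebraic bookkeeping.
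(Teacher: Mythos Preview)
The paper does not prove this proposition; it is stated as Kruskal's classical result with a citation to \cite{kruskal77} and then used as a black box in the proofs of Theorems~\ref{thm:discrete} and~\ref{thm:gmm}. So there is no ``paper's own proof'' to compare against. Your outline follows the standard route (slice matrices $W(v)$, rank equals support size, permutation lemma, then propagate), which is indeed how Kruskal's theorem is established in the literature.

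That said, there is a genuine gap in your final step. You write that one can choose $v$ so that $\mC^\top v$ has exactly one nonzero entry, ``possible because $\mC$ has Kruskal rank at least $1$.'' This is false in general: Kruskal rank $\geq 1$ only says no column is zero, and hitting a single standard basis vector with $\mC^\top v$ requires $\mC$ to have full column rank $\tRank$, which the hypothesis does not guarantee. For instance, with $\tRank=3$, $k_\mA=k_\mB=3$, $k_\mC=2$ (so $3+3+2=8=2\cdot 3+2$), the matrix $\mC$ can be $2\times 3$ and its transpose has only a two-dimensional range in $\R^3$. The standard fix is to apply the permutation lemma symmetrically to all three modes (the Kruskal inequality is symmetric in $\mA,\mB,\mC$), obtaining $\tilde{\mA}=\mA\mPerm_A\Lambda_A$, $\tilde{\mB}=\mB\mPerm_B\Lambda_B$, $\tilde{\mC}=\mC\mPerm_C\Lambda_C$ separately, and then argue by substitution that the three permutations coincide and the scalings multiply to the identity. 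Your sketch of the permutation lemma itself is accurate in spirit but, as you acknowledge, that is where all the real work lies; what you have written for it is a description of the conclusion rather than a proof.
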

We note that this work focuses on identifiability result rather than providing an algorithm, though the proofs can be adapted into algorithms \citep[see, e.g.,][]{Jennrich70} under slightly more restrictive conditions (which will be satisfied by all of our identifiability results).

\subsection{Assumptions}

We now state the assumptions used in our results.
The first assumption is that the transition matrices of the HMMs are doubly stochastic.
\begin{assumption}[Doubly stochastic transitions]
\label{assump:latent}
  The transition matrix $\mTrans$ is doubly stochastic, and the marginal distribution of the initial hidden state $h_1$ is stationary with respect to $\mTrans$.
\end{assumption}
This assumption guarantees that the stationary distribution of the latent distribution is uniform for any $t$, and the transition matrix for the reversed chain is simply $\mTrans^\top$.
Moreover, this assumption reduces the parameter space and hence will make the non-identifiability results stronger.

We require the following conditions on the parameters for the discrete HMM:
\begin{assumption}[Non-redundancy, discrete HMM]
\label{assump:discrete_rowNE}
    Every row of $\mEmiss$ is non-zero.
\end{assumption}
Assumption \ref{assump:discrete_rowNE} can be interpreted as requiring each token to have a non-zero probability of being observed, which is a mild assumption.
We also require the following non-degeneracy condition:
\begin{assumption}[Non-degeneracy, discrete HMM]
\label{assump:discrete_full_rank}
$\rank(\mTrans) = \rank(\mEmiss) = k \leq d$.
\end{assumption}
Note that Assumption \ref{assump:discrete_full_rank} only requires the parameters to be non-degenerate, rather than have singular values bounded away from 0.
The reason is that this work will focus on population level quantities and make no claims on finite sample behaviors or robustness.

For \GHMM, we similarly require the parameters to be non-degenerate:
\begin{assumption}[Non-degeneracy, \GHMM]
\label{assump:gmm_full_rank}
$\rank(\mTrans) = \rank(\mMeans) = k \leq d$.
\end{assumption}
Moreover, we assume that the norms of the means are known and equal:
\begin{assumption}[Equal norms of the means] 
\label{assump:gmm_equal_norms}
  For each $i \in [k]$, $\mu_i$ is a unit vector.
%
\end{assumption}
(Assumption~\ref{assump:gmm_equal_norms} can be changed to $\|\mu_i\|_2=c$ for all $i \in [k]$, for any other fixed number $c>0$.)

Assumptions \ref{assump:latent}-\ref{assump:gmm_full_rank} are fairly standard~\citep[see, e.g.,][]{anandkumar2012multiview}; Assumption \ref{assump:gmm_equal_norms} may be an artifact of our proofs, and it would be interesting to relax in future work.

Our notion of identifiability from a prediction task (or a collection of prediction tasks) will restrict attention to HMMs satisfying Assumptions \ref{assump:latent}, \ref{assump:discrete_rowNE}, \ref{assump:discrete_full_rank} and {\GHMM}s satisfying Assumptions \ref{assump:latent}, \ref{assump:gmm_full_rank}, \ref{assump:gmm_equal_norms}.

\vspace{-1em}
\section{Identifiability from masked prediction tasks}
\label{sec:results}

\subsection{Pairwise prediction}
\label{s:pairwise} 
We begin with the simplest prediction task: namely predicting one token from another.
We refer to such tasks as \emph{pairwise prediction tasks}.
For HMMs, this task fails to provide parameter identifiability:
\ifx\coltSpace\undefined
\vspace{-2em}
\fi
\begin{theorem}[Nonidentifiability of HMM from predicting $x_2|x_1$]
\label{thm:discrete_2_1_nonId}
    There exists a pair of HMM distributions with parameters $(\mEmiss, \mTrans)$ and $(\tilde\mEmiss, \tilde\mTrans)$, each satisfying Assumptions~\ref{assump:latent}, \ref{assump:discrete_rowNE} and \ref{assump:discrete_full_rank}, and also $\tilde\mEmiss \neq \mEmiss$,
    such that the optimal predictors for the task $x_2|x_1$ are the same under each distribution.
\end{theorem}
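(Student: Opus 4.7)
My plan is to prove non-identifiability by exhibiting an explicit \emph{gauge freedom} in the map from HMM parameters to the optimal pairwise predictor, and then instantiating it in a concrete small example. First, I would rewrite $f^{2|1}$ in compact matrix form: from the derivation in the excerpt, $f^{2|1}(x)=\mEmiss\mTrans\phi(x)$ with $\phi(x)_j=\mEmiss_{x,j}/\sum_l\mEmiss_{x,l}$, so stacking over $x\in[d]$, the predictor is completely captured by the row-stochastic matrix $\mD_r^{-1}\mEmiss\mTrans^\top\mEmiss^\top$ with $\mD_r=\diag(\mEmiss\mathbf{1})$. Two HMMs give the same pairwise predictor iff they produce the same such matrix.

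Next, I would exhibit a continuous family of reparametrizations that leave this matrix fixed. For any invertible $\mQ\in\R^{k\times k}$ satisfying $\mQ\mathbf{1}=\mathbf{1}$ and $\mathbf{1}^\top\mQ=\mathbf{1}^\top$ (so $\mQ$ preserves row and column sums, but may have negative entries), define $\tilde\mEmiss=\mEmiss\mQ$ and $\tilde\mTrans=\mQ^{-1}\mTrans\mQ^{-\top}$. Three algebraic checks then suffice: (a) $\tilde\mEmiss\tilde\mTrans^\top\tilde\mEmiss^\top=\mEmiss\mTrans^\top\mEmiss^\top$, because the $\mQ$'s telescope; (b) $\diag(\tilde\mEmiss\mathbf{1})=\diag(\mEmiss\mQ\mathbf{1})=\mD_r$; and (c) $\tilde\mEmiss$ is column-stochastic and $\tilde\mTrans$ is doubly stochastic, because $\mathbf{1}$ is both a left and right eigenvector of $\mQ$, and hence of $\mQ^{-1}$, with eigenvalue $1$. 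Together (a) and (b) show the predictor matrix is unchanged, so the map from HMM parameters to the pairwise predictor is constant on a $(k-1)^2$-dimensional orbit of such $\mQ$'s.

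To finish, I would instantiate this in the minimal case $k=d=2$: take $\mEmiss=\mI$, $\mTrans=\left(\begin{smallmatrix}t&1-t\\1-t&t\end{smallmatrix}\right)$ with some $t\in(\tfrac12,1)$, and $\mQ=\left(\begin{smallmatrix}1-q&q\\q&1-q\end{smallmatrix}\right)$ with small $q\in(0,\tfrac12)$. Then $\tilde\mEmiss=\mQ$ is entrywise positive and column-stochastic, and a direct calculation of $\tilde\mTrans=\mQ^{-1}\mTrans\mQ^{-1}$ yields a symmetric doubly-stochastic matrix whose off-diagonal entry is $((1-t)-2q(1-q))/(1-2q)^2$, nonnegative for suitable choices (e.g.\ $t=0.6,q=0.2$ gives $\tilde\mTrans$ with off-diagonal $\approx 0.222$). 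Both HMMs satisfy Assumptions~\ref{assump:latent}--\ref{assump:discrete_full_rank} (full rank only needs $t,q\neq\tfrac12$), and they are not permutation-equivalent, since $\mEmiss=\mI$ has zero off-diagonal entries while $\tilde\mEmiss=\mQ$ does not.

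The main obstacle is ensuring the entrywise nonnegativity of $\tilde\mEmiss$ and $\tilde\mTrans$: $\mQ^{-1}$ generally has negative entries, so $\tilde\mTrans$ need not be a valid transition matrix. In the $2\times 2$ case this reduces to a single explicit inequality, but in general I would argue by continuity: if the baseline $(\mEmiss,\mTrans)$ has strictly positive entries and $\mQ=\mI+\epsilon\Delta$ is a small perturbation with $\Delta\mathbf{1}=\mathbf{0}$ and $\mathbf{1}^\top\Delta=\mathbf{0}$, then for sufficiently small $\epsilon$ both transformed matrices remain positive, yielding a full $(k-1)^2$-dimensional family of non-permutation-equivalent HMMs with matching pairwise predictors around any positive baseline. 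The argument extends to all $d\geq k$ by starting from any strictly positive column-stochastic $\mEmiss\in\R^{d\times k}$ of rank $k$.
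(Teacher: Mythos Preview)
Your proposal is correct and rests on the same underlying idea as the paper: a gauge transformation $\tilde\mEmiss=\mEmiss\mQ$, $\tilde\mTrans=\mQ^{-1}\mTrans\mQ^{-\top}$ with $\mQ\mathbf{1}=\mathbf{1}$ and $\mathbf{1}^\top\mQ=\mathbf{1}^\top$ leaves $\mD_r^{-1}\mEmiss\mTrans^\top\mEmiss^\top$ (and hence the pairwise predictor) invariant, and a continuity argument handles entrywise nonnegativity. The paper, however, does not prove Theorem~\ref{thm:discrete_2_1_nonId} directly; it instead proves the stronger Theorem~\ref{thm:discrete_nonId} (matching $x_2|x_1$, $x_1|x_2$, $x_3|x_1$, $x_1|x_3$ simultaneously) and cites Theorem~\ref{thm:discrete_2_1_nonId} as a special case. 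Matching $x_3|x_1$ as well forces the additional constraint $(\tilde\mEmiss^\dagger\mEmiss)^\top(\tilde\mEmiss^\dagger\mEmiss)=\mI_k$, i.e.\ $\mQ$ must be \emph{orthogonal}; since the only $2\times 2$ orthogonal matrices fixing $\mathbf{1}$ are permutations, the paper is pushed to $k=3$ and a rotation about the $\mathbf{1}$-axis, with a concrete $d=4,k=3$ example. Your argument, by targeting only Theorem~\ref{thm:discrete_2_1_nonId}, drops the orthogonality requirement and thereby admits the much simpler $k=d=2$ instance. What you gain is elementarity and an explicit $(k-1)^2$-dimensional family around any strictly positive baseline; what the paper's route buys is the stronger nonidentifiability statement covering multiple pairwise tasks at once.
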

\vspace{-1em}
Theorem \ref{thm:discrete_2_1_nonId} is not very surprising: the optimal predictor for the HMM above has the form of a product of (stochastic) matrices, and generally, one cannot uniquely recover matrices from their product sans additional conditions~\citep{donoho2003optimally,candes2006stable,spielman2012exact,arora2014new,georgiev2005sparse,aharon2006uniqueness,cohen2019identifiability}.

On the other hand, pairwise prediction actually \emph{does} suffice for identifiability for \GHMM:
\ifx\coltSpace\undefined
\vspace{-0.8em}
\fi
\begin{theorem}[Identifiability of \GHMM from predicting $x_2|x_1$]
\label{thm:gmm_2_1_id}
    Under Assumption \ref{assump:latent}, \ref{assump:gmm_full_rank}, and \ref{assump:gmm_equal_norms},
    if the optimal predictors for the task $x_2|x_1$ under the \GHMM distributions with parameters
    $(\mMeans, \mTrans)$ and $(\tilde\mMeans, \tilde\mTrans)$ are the same,
    then $(\mMeans, \mTrans) = (\tilde\mMeans, \tilde\mTrans)$
    up to a permutation of the hidden state labels.
\end{theorem}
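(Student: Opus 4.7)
The plan is to exploit the explicit parametric form of the optimal predictor. Under Assumption~\ref{assump:latent} (uniform stationary distribution on $h_1$) together with Assumption~\ref{assump:gmm_equal_norms}, Bayes' rule gives $P(h_1 = j \mid x_1 = x) \propto \exp(\mu_j^\top x)$, so the predictor takes the closed form
\begin{equation*}
f(x) \;=\; \mMeans\,\mTrans\,\softmax(\mMeans^\top x).
\end{equation*}
Writing $\nu_j := \mMeans\mTrans\, e_j$ for the $j$-th column of $\mMeans\mTrans$, the proof has three stages: first recover the unordered set $\{\nu_j\}$ from the image of $f$; then peel off the softmax to recover $\mMeans$ up to permutation and a shift; finally kill that shift using double stochasticity.

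For stage one, $\mMeans^\top\colon \R^d\to\R^k$ is surjective by Assumption~\ref{assump:gmm_full_rank}, so $\softmax(\mMeans^\top x)$ sweeps the open simplex as $x$ ranges over $\R^d$. Hence $f(\R^d)$ is the relative interior of the polytope $\mathrm{conv}\{\nu_j\}_{j\in[k]}$, and its $k$ vertices are affinely independent because $\rank(\mMeans\mTrans) = k$. Reading off the extreme points of the closure of $f(\R^d)$ identifies $\{\nu_j\}$ as an unordered set; the same argument for $\tilde{f}$, combined with $f = \tilde{f}$, produces a bijection $\sigma\colon [k]\to[k]$ with $\nu_j = \tilde{\nu}_{\sigma(j)}$.

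For stage two, re-index $f(x) = \sum_{j'} \tilde{\nu}_{j'}\, P_{\sigma^{-1}(j')}(x)$, with $P(\cdot) := \softmax(\mMeans^\top\cdot)$, and write $\tilde{f}(x) = \sum_{j'} \tilde{\nu}_{j'}\,\tilde{P}_{j'}(x)$ with $\tilde{P}(\cdot) := \softmax(\tilde{\mMeans}^\top\cdot)$. Since $\{\tilde{\nu}_{j'}\}$ are linearly independent in $\R^d$ and the coefficients are scalars, $f = \tilde{f}$ forces $P_j(x) = \tilde{P}_{\sigma(j)}(x)$ for every $j$ and $x$. Letting $\Pi$ be the permutation matrix with $\Pi e_j = e_{\sigma(j)}$, this rewrites as $\softmax(\mMeans^\top x) = \softmax(\Pi^\top\tilde{\mMeans}^\top x)$. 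The only invariance of $\softmax$ is addition of a multiple of $\mathbf{1}$, and linearity in $x$ then forces the existence of a $c \in \R^d$ with $\mMeans = \tilde{\mMeans}\Pi + c\,\mathbf{1}^\top$.

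For stage three, combine this with the column-matching identity $\mMeans\mTrans = \tilde{\mMeans}\tilde{\mTrans}\Pi$. Substituting and using $\mathbf{1}^\top\mTrans = \mathbf{1}^\top$ (from Assumption~\ref{assump:latent}) yields $\tilde{\mMeans}(\Pi\mTrans - \tilde{\mTrans}\Pi) = -c\,\mathbf{1}^\top$. Right-multiplying by $\mathbf{1}$ annihilates the left-hand side via $\mTrans\mathbf{1} = \tilde{\mTrans}\mathbf{1} = \mathbf{1}$, so $c = 0$; then full column rank of $\tilde{\mMeans}$ delivers $\Pi\mTrans\Pi^\top = \tilde{\mTrans}$, and combined with $\mMeans = \tilde{\mMeans}\Pi$ this is the desired identification up to a permutation of hidden-state labels. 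The main conceptual hurdle I expect is in stage two: $f$ couples the external factor $\mMeans\mTrans$ with the internal argument $\mMeans^\top x$ of the softmax, and the image-plus-linear-independence step is precisely what lets these two distinct occurrences of $\mMeans$ be disentangled; once they are, double stochasticity cleanly removes the remaining shift ambiguity.
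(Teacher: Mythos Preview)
Your proof is correct and takes a genuinely different route from the paper's.

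The paper proceeds by first showing that $f=\tilde f$ forces $\tilde\phi = R\phi$ for some invertible $R$, then proves (Lemma~\ref{lem:gmm_matchPhi}, via matching Jacobians at limit points $x=c\mu_i$, $c\to\infty$) that $R$ must be a permutation. With $\phi=\tilde\phi$ up to permutation, a separate geometric argument (Lemma~\ref{lem:gmm_phi_reflect}) shows that the only remaining ambiguity in $\mMeans$ is a specific Householder reflection $\mH$, which is then ruled out because it forces the columns of $\tilde\mTrans$ to sum to $-1$.

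You instead read the columns $\{\nu_j\}$ of $\mMeans\mTrans$ directly off the image of $f$ as the vertices of a simplex, use linear independence of the $\nu_j$ to match the softmax coefficients pointwise, and then invoke the elementary shift-invariance of softmax to obtain $\mMeans = \tilde\mMeans\Pi + c\,\1^\top$. Double stochasticity kills $c$ in one line. This is shorter and more elementary: it avoids both the Jacobian computation and the Householder analysis entirely. Conceptually, your residual shift $c\,\1^\top$ is the same obstruction as the paper's $\mH$ (since $\mH\mMeans$ is exactly a translation of the columns of $\mMeans$ by a common vector), but you never need to pin down \emph{which} translation is consistent with Assumption~\ref{assump:gmm_equal_norms}; double stochasticity rules out all of them at once. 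The trade-off is that the paper's route isolates Lemmas~\ref{lem:gmm_phi_reflect} and~\ref{lem:gmm_matchPhi} as standalone structural facts about the posterior map $\phi$, which it later reuses in the three-token case (Theorem~\ref{thm:gmm}); your argument is tailored to the specific form $f=\mMeans\mTrans\,\softmax(\mMeans^\top\cdot)$ and does not yield those lemmas as by-products.
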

\vspace{-0.6em}
Comparing Theorem \ref{thm:discrete_2_1_nonId} and \ref{thm:gmm_2_1_id} shows that the specific parametric form of the optimal predictor matters.
Note that HMM and \GHMM have a similar form when conditioning on the latent variable; that is, $P(x_2|h_2=i) = \mEmiss\mTrans_i$ for HMM, and $P(x_2|h_2) = \mMeans\mTrans_i$ for \GHMM.
The salient difference between these two setups lies in the posterior function:
while the posterior function for HMM is linear in the input as we saw earlier, the posterior function for \GHMM is more complicated and ``reveals'' more information about the parameter. 

More precisely, even matching the posterior function \emph{nearly} suffices to identify $\mMeans$:
if $\mMeans, \tilde\mMeans$ parameterize two posterior functions $\phi, \tilde\phi$ where $\phi = \tilde\phi$,
then up to a permutation, $\tilde{\mMeans}$ must be equal to either $\mMeans$ or a unique (and somewhat special) transformation of $\mMeans$. The (special)  transformation can also be excluded, by using the constraint that $T, \tilde{T}$ are stochastic matrices. 

The first claim can be formalized as follows: 
\ifx\coltSpace\undefined
\vspace{-0.8em}
\fi
\begin{lemma}
\label{lem:gmm_phi_reflect}
    For $d \geq k \geq 2$,
    under Assumption \ref{assump:gmm_full_rank}, \ref{assump:gmm_equal_norms},
    $\phi = \tilde\phi$ implies $\tilde\mMeans = \mMeans$ or $\tilde{\mMeans} = \mH \mMeans$,
    where $\mH$ is a Householder transformation of the form $\mH := \mI_d - 2\hat{v}\hat{v}^\top \in \R^{d \times d}$,
    with $\hat{v} := \frac{(\mMeans^{\dagger})^\top\1}{\sqrt{\1^\top\mMeans^{\dagger} (\mMeans^\dagger)^{\top}\1}}$.
\end{lemma}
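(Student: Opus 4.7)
The plan is to convert $\phi=\tilde\phi$ into a single algebraic identity relating $\mMeans$ and $\tilde\mMeans$, and then solve that identity using the unit-norm constraint from Assumption~\ref{assump:gmm_equal_norms}.

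First, I would compute $\phi$ explicitly. Since the stationary prior on $h_t$ is uniform (Assumption~\ref{assump:latent}) and each $\mu_i$ is a unit vector, Bayes' rule makes the terms $\|x\|^2/2$ and $\|\mu_i\|^2/2=1/2$ cancel inside the softmax, yielding
\begin{equation*}
\phi(x)=\softmax\bigl(\mMeans^{\top}x\bigr),\qquad \tilde\phi(x)=\softmax\bigl(\tilde\mMeans^{\top}x\bigr).
\end{equation*}
Using that $\softmax(u)=\softmax(v)$ if and only if $u-v\in\Vspan(\1)$, and absorbing any relabeling of the hidden states into $\tilde\mMeans$, the hypothesis $\phi=\tilde\phi$ forces $(\mMeans-\tilde\mMeans)^{\top}x\in\Vspan(\1)\subset\R^k$ for every $x\in\R^d$. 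Since this map is linear with image in a one-dimensional subspace, its matrix must factor as $(\mMeans-\tilde\mMeans)^{\top}=\1\,b^{\top}$ for some $b\in\R^d$; equivalently, $\tilde\mu_i=\mu_i-b$ for all $i\in[k]$.

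Next, imposing $\|\tilde\mu_i\|^2=1$ and expanding yields $\mu_i^{\top}b=\|b\|^2/2$ simultaneously for every $i$, i.e.,
\begin{equation*}
\mMeans^{\top}b \;=\; \tfrac{\|b\|^2}{2}\,\1.
\end{equation*}
The case $b=0$ immediately gives $\tilde\mMeans=\mMeans$. For $b\ne 0$, I would apply $(\mMeans^{\dagger})^{\top}$ to both sides: since $(\mMeans^{\dagger})^{\top}\mMeans^{\top}=\mMeans\mMeans^{\dagger}$ is the orthogonal projector onto $\text{col}(\mMeans)$, this determines the projection of $b$ onto $\text{col}(\mMeans)$ as a scalar multiple of $w:=(\mMeans^{\dagger})^{\top}\1$. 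A quadratic consistency check between $\|b\|$ and $\|b\|^2/2$ then pins that scalar down and yields $b=2w/\|w\|^2$. A direct calculation---$\mMeans^{\dagger}\mu_i=e_i$ implies $\hat v^{\top}\mu_i=1/\|w\|$---shows $b=2(\hat v^{\top}\mu_i)\hat v$ for every $i$, so $\tilde\mu_i=\mu_i-2(\hat v^{\top}\mu_i)\hat v=\mH\mu_i$, i.e.\ $\tilde\mMeans=\mH\mMeans$.

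The step I expect to be the main obstacle is ruling out a component of $b$ in $\ker(\mMeans^{\top})$: the identity $\mMeans^{\top}b=(\|b\|^2/2)\1$ by itself only constrains the projection of $b$ onto $\text{col}(\mMeans)$, so any orthogonal component is a priori free when $d>k$. This is automatic in the square case $d=k$ where the kernel is $\{0\}$; in the rectangular case one must squeeze out the extra degree of freedom by coupling the scalar consistency $\|b\|^2=2\mu_i^{\top}b$ (which is blind to the orthogonal part) with a more refined argument---for instance, showing that only the two endpoints of the resulting feasibility interval in $\|b\|^2$ correspond to an admissible $\tilde\mMeans$. I expect this to be the most delicate step of the proof.
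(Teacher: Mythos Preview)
Your derivation is essentially the paper's: both reduce $\phi=\tilde\phi$ to $\tilde\mu_i=\mu_i-b$ together with $(2\mu_i-b)^\top b=0$ for all $i$, and then identify the unique nonzero $b$ in $\text{col}(\mMeans)$ as the Householder direction. Your softmax packaging is a bit cleaner than the paper's ratio computation, but the content is the same.

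You are also right that the $d>k$ case is the crux, but your proposed fix will not work. The constraint $\mMeans^\top b=\tfrac{\|b\|^2}{2}\1$ genuinely leaves the $\ker(\mMeans^\top)$ component of $b$ free: writing $b=b_\parallel+b_\perp$ with $b_\parallel=c\,w$ and $b_\perp\perp\text{col}(\mMeans)$, the only requirement is $2c-c^2\|w\|^2=\|b_\perp\|^2\ge 0$, so every $c\in(0,2/\|w\|^2]$ gives a valid $\tilde\mMeans$ with unit-norm columns and full rank $k$. There is a whole continuum of solutions, not just the two endpoints, so no ``feasibility interval'' argument internal to the hypotheses $\phi=\tilde\phi$ plus Assumptions~\ref{assump:gmm_full_rank}--\ref{assump:gmm_equal_norms} can single out $b=0$ and $b=2w/\|w\|^2$.

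The paper closes this gap by importing an extra constraint from the context in which the lemma is used: when the optimal \emph{predictors} $\mMeans\mTrans\phi(\cdot)$ and $\tilde\mMeans\tilde\mTrans\tilde\phi(\cdot)$ agree, their images force $\text{col}(\mMeans)=\text{col}(\tilde\mMeans)$, hence $b\in\text{col}(\mMeans)$ and $b_\perp=0$. With that in hand the problem collapses to the $d=k$ analysis and your quadratic consistency check goes through. So the right move is not a more refined intrinsic argument but to invoke the column-space equality (either by strengthening the lemma's hypotheses or by noting, as the paper does, that it is available whenever the lemma is applied).
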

\vspace{-0.5em}
To provide some geometric intuition about how $\mH$ acts on $\mMeans$,
note that $\hat{v}$ is a unit vector in the column space of $\mMeans$ and perpendicular to the affine hull of $\gA := \{\mu_i: i \in [k]\}$,
which means $\hat{v}^\top\mu_i$ is the same for all $i \in [k]$.
As a result, $\tilde\mMeans = [\tilde\mu_1, ..., \tilde\mu_k] = [\mH\mu_1, ..., \mH\mu_k] = \mMeans - 2(\hat{v}^\top\mu_1)[\hat{v}, ..., \hat{v}]$ is a translation of $\mMeans$ along the direction of $\hat{v}$,
such that the translated points $\{\tilde\mu_i\}_{i\in[k]}$ lie on 
the opposite side of the origin.
It is non-trivial to argue that $\mH\mMeans$ is the only solution (other than $\mMeans$ itself) that preserves $\phi$---the proof is  deferred to Appendix \ref{sec:proof_gmm_phi_reflect}.
It is, however, easy to see that $\mH\mMeans$ indeed results in a matching posterior: for this to happen, it's sufficient that 1) $\tilde\mMeans$ is a translation of $\mMeans$, and 2) $\|\tilde\mu_i\|^2 - \|\mu_i\|^2$ is the same for all $i \in [k]$. A quick calculation shows that
$\tilde\mMeans := \mH\mMeans$ indeed satisfies both of these conditions.


\textit{Proof sketch for Theorem~\ref{thm:gmm_2_1_id}}:
We first show that if $\mMeans,\mTrans$ and $\tilde\mMeans,\tilde\mTrans$ produce the same predictor,
then their posterior function must be equal up to a permutation (Lemma~\ref{lem:gmm_matchPhi}).
We can then apply Lemma \ref{lem:gmm_phi_reflect} to recover $\mMeans$ up to a permutation and a Householder transformation $\mH$.
Then, we show that if $\tilde\mMeans = \mH\mMeans$, then the corresponding $\tilde\mTrans$ must have negative entries and thus would not be a valid stochastic matrix.
Hence it must be that $\tilde\mMeans, \mMeans$ are equal up to permutation.

Finally, by way of remarks, another way to think of the difference between the two setups is that for HMM, $P(x_2|x_1)$ is a mixture of categorical distributions, which itself is also a categorical distribution.
This also implies that the nonidentifiability from pairwise prediction in the HMM case cannot be resolved by changing the squared loss to another proper loss function.
On the other hand, for \GHMM, the conditional distribution $P(x_2|x_1)$ is a mixture of Gaussians, which is well known to be identifiable. In fact, if we were given access to the \emph{entire} conditional distribution $P(x_2|x_1)$ (instead of just the conditional mean), it's even easier to prove identifiability for \GHMM. Though this is already implied from identifiability from the conditional means, we provided a (much simpler) proof in Appendix \ref{appendix:gmm_cond_distr} assuming access to the full conditional distribution.

\subsection{Beyond pairwise prediction}
\label{s:threewise}

The conclusion from Theorem \ref{thm:discrete_2_1_nonId} is that a single pairwise prediction task does not suffice for identifiability on HMMs.
The next natural question is then: can we modify the task to obtain identifiability.
A natural idea is to force the model to ``predict more'', and one straightforward way to do so is to train on more than one pairwise prediction tasks.
Unfortunately, this does not resolve the problem:
\ifx\coltSpace\undefined
\vspace{-0.8em}
\fi
\begin{theorem}[Nonidentifiability of HMM from multiple pairwise predictions]
\label{thm:discrete_nonId}
    There exists a pair of HMM distributions with parameters $(\mEmiss, \mTrans)$ and $(\tilde\mEmiss, \tilde\mTrans)$, each satisfying Assumptions~\ref{assump:latent}, \ref{assump:discrete_rowNE} and \ref{assump:discrete_full_rank},
    and also $\tilde\mEmiss \neq \mEmiss$,
     such that, for each of the tasks $x_2|x_1$, $x_1|x_2$, $x_3|x_1$, and $x_1|x_3$,
     the optimal predictors are the same under each distribution.%
    \footnote{Theorem \ref{thm:discrete_nonId} subsumes Theorem \ref{thm:discrete_2_1_nonId} as a special case.}
\end{theorem}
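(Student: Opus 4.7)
The plan is to prove nonidentifiability by exhibiting an explicit pair of HMMs $(\mEmiss, \mTrans)$ and $(\tilde\mEmiss, \tilde\mTrans)$, both satisfying the assumptions, with $\tilde\mEmiss \neq \mEmiss$, yet producing identical optimal predictors for all four tasks $x_2|x_1$, $x_1|x_2$, $x_3|x_1$, and $x_1|x_3$. The core idea is to exploit a continuous symmetry: choose $\mTrans$ and $\mEmiss$ to lie in the commutative algebra spanned by $I$ and $\1\1^\top$, and then apply an orthogonal change of basis $\mR$ that fixes $\1$ but is not a permutation matrix.

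Concretely, I would take $k = d = 3$ and set
\[
  \mEmiss := (1 - 3\epsilon)\, I + \epsilon\, \1\1^\top,
  \qquad
  \mTrans := (1 - 3\delta)\, I + \delta\, \1\1^\top
\]
for parameters $\epsilon, \delta \in (0, 1/3)$. Both matrices are doubly stochastic, nonnegative, and of full rank (eigenvalues $1$ and $1-3\epsilon$ for $\mEmiss$, and $1, 1-3\delta$ for $\mTrans$), and every row of $\mEmiss$ is a valid probability vector that sums to $1$. I then pick a non-identity rotation $\mR$ of the two-dimensional subspace orthogonal to $\1$: the stabilizer of $\1$ in $O(3)$ is isomorphic to $O(2)$, so such $\mR$ exist in a continuous family, including choices arbitrarily close to $I$. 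Set $\tilde\mEmiss := \mEmiss \mR$ and $\tilde\mTrans := \mR^\top \mTrans \mR$. Because both $I$ and $\1\1^\top$ commute with any $\mR$ preserving $\1$ (indeed $\mR\1\1^\top = \1\1^\top = \1\1^\top\mR$), we get $\mR^\top \mTrans \mR = \mTrans$, so $\tilde\mTrans = \mTrans$. The identities $\1^\top \mR = \1^\top$ and $\mR \1 = \1$ then imply that $\tilde\mEmiss$ is column-stochastic with row sums also equal to $1$; invertibility of $\mEmiss$ yields $\tilde\mEmiss \neq \mEmiss$ (and in fact $\tilde\mEmiss \neq \mEmiss \Pi$ for any permutation $\Pi$) whenever $\mR \neq I$.

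Matching of the predictors is then algebraic. By the derivation in the excerpt, the optimal predictor for $x_t|x_1$, viewed as a matrix whose $a$-th column is $f^{t|1}(e_a)$, equals $\mEmiss \mTrans^{t-1} \mEmiss^\top \diag(\mEmiss \1)^{-1}$; by the analogous argument applied to the time-reversed chain (whose transition is $\mTrans^\top$ under Assumption~\ref{assump:latent}), $f^{1|t} = \mEmiss (\mTrans^\top)^{t-1} \mEmiss^\top \diag(\mEmiss \1)^{-1}$. Under our construction the normalizing diagonal is $I$ (rows of $\mEmiss$ and $\tilde\mEmiss$ sum to $1$), $\mTrans$ is symmetric, and orthogonality of $\mR$ gives
\[
  \tilde\mEmiss\, \tilde\mTrans^{m}\, \tilde\mEmiss^\top \;=\; \mEmiss\, \mR\, \mTrans^m\, \mR^\top\, \mEmiss^\top \;=\; \mEmiss\, \mTrans^m\, \mEmiss^\top \qquad (m = 1, 2),
\]
with the same identity when $\mTrans$ is replaced by $\mTrans^\top$. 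Hence all four optimal predictors coincide across the two distributions.

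The main obstacle I expect is ensuring that $\tilde\mEmiss = \mEmiss \mR$ remains entrywise nonnegative, so that it is a bona fide emission matrix. Writing $(\tilde\mEmiss)_{ij} = (1 - 3\epsilon)\,\mR_{ij} + \epsilon$, nonnegativity fails only at pairs with $\mR_{ij} < 0$, and there it reduces to $\epsilon \geq -\mR_{ij}/(1 - 3\mR_{ij})$. A continuity argument handles this: for $\mR$ sufficiently close to $I$, each $\mR_{ij}$ is close to $\delta_{ij}$ and this threshold is close to $0$, so any $\epsilon$ slightly below $1/3$ makes every entry of $\tilde\mEmiss$ strictly positive while preserving $\rank(\mEmiss) = 3$ and $\mR \neq I$. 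Strict positivity then gives Assumption~\ref{assump:discrete_rowNE} for $\tilde\mEmiss$ for free, completing the construction.
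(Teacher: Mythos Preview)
Your proposal is correct and takes essentially the same approach as the paper: both construct $\tilde\mEmiss = \mEmiss\mR$ and $\tilde\mTrans = \mR^\top\mTrans\mR$ for an orthogonal $\mR$ fixing $\1$ (a rotation about the axis $\1$ in $\R^3$), relying on equal row sums of $\mEmiss$ to make the posterior linear, symmetry of $\mTrans$ to handle the reversed tasks, and a small rotation angle to preserve nonnegativity. Your specific choice of $\mEmiss,\mTrans$ in the algebra spanned by $I$ and $\1\1^\top$ is a clean specialization that forces $\tilde\mTrans=\mTrans$ and makes the verification slightly more direct than the paper's more general derivation and numerical example with $d=4$, $k=3$.
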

\ifx\coltSpace\undefined
\vspace{-0.8em}
\fi
Recall that the limitation of pairwise predictions on HMMs comes from non-uniqueness of matrix factorization.
While adding additional pairwise prediction tasks introduces more equations on the product of matrices, these equations are highly dependent---and we provide examples that can simultaneously satisfy all these equations.

Another way of forcing the model to ``predict more'' is to consider predictions of joint statistics involving multiple tokens. The hope is that doing so results in equations on tensors---as opposed to matrices--- for which there is a lot of classical machinery delineating tensors for which the rank-1 decomposition is unique, as discussed in Section~\ref{sec:tensor}.
This intuition proves to be true and gives the following result:
\ifx\coltSpace\undefined
\vspace{-0.6em}
\fi
\begin{theorem}[Identifiability from masked prediction on three tokens, HMM]
\label{thm:discrete}
    Let $(\tOne, \tTwo, \tThr)$ be any permutation of $(1,2,3)$, and consider the prediction task $x_{\tTwo} \otimes x_\tThr | x_\tOne$.
    Under Assumption \ref{assump:latent}, \ref{assump:discrete_rowNE}, \ref{assump:discrete_full_rank},
    if the optimal predictors under the HMM distributions with parameters $(\mEmiss,\mTrans)$ and  $(\tilde\mEmiss, \tilde\mTrans)$ are the same,
    then $(\mEmiss,\mTrans) = (\tilde\mEmiss, \tilde\mTrans)$
    up to a permutation of the hidden state labels.
\end{theorem}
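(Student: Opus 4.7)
The strategy is to encode the predictor, viewed as a function of $x_{t_1} \in [d]$, as a $d \times d \times d$ tensor and recover its rank-one components via Kruskal's theorem (Proposition~\ref{prop:kruskal}), then read off $(\mEmiss, \mTrans)$ using column-stochasticity. Define
\[
  M_{a,b,c} := \bigl[\E[x_{t_2} \otimes x_{t_3} \mid x_{t_1} = c]\bigr]_{a,b}.
\]
By the HMM conditional independencies, the uniform stationary distribution on hidden states (Assumption~\ref{assump:latent}), and the fact that the reversed chain has transition matrix $\mTrans^\top$, $M$ admits a rank-$k$ CP decomposition $M = \sum_{j=1}^k \mA_{:,j} \otimes \mB_{:,j} \otimes \mC_{:,j}$ whose factor matrices are built from $\mEmiss$, $\mTrans$ (or $\mTrans^\top$), and the diagonal $D := \diag(1/(\mEmiss \1))$ (well defined by Assumption~\ref{assump:discrete_rowNE}). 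For $(t_1,t_2,t_3)=(1,2,3)$ the triple is $(\mEmiss,\, \mEmiss\mTrans,\, D\mEmiss\mTrans^\top)$; for $t_1 = 3$ the factors arise by time reversal; for $t_1 = 2$ the triple is $(\mEmiss\mTrans^\top,\, \mEmiss\mTrans,\, D\mEmiss)$; swapping $t_2, t_3$ merely swaps two factors.

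Under Assumption~\ref{assump:discrete_full_rank} every factor matrix is full column rank $k$, so each has Kruskal rank $k$ and $k_A + k_B + k_C = 3k \geq 2k+2$ for $k \geq 2$ (the $k = 1$ case is trivial). Kruskal's theorem then forces any $(\tilde\mEmiss, \tilde\mTrans)$ yielding the same $M$ to produce factor matrices related to the original by a shared column permutation $\Pi$ and diagonal rescalings $\Lambda_1 \Lambda_2 \Lambda_3 = I$. In all three cases, two of the factors are column-stochastic (using that $\mTrans$ is doubly stochastic, so both $\mEmiss\mTrans$ and $\mEmiss\mTrans^\top$ have column sums $1$); matching column sums to the $\tilde{\cdot}$-counterparts forces $\Lambda_1 = \Lambda_2 = I$, hence $\Lambda_3 = I$.

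For $t_1 \in \{1, 3\}$ the matrix $\mEmiss$ itself appears as a factor, so $\tilde\mEmiss = \mEmiss\Pi$ is read off directly; combining with $\tilde\mEmiss\tilde\mTrans = \mEmiss\mTrans\Pi$ and the full column rank of $\mEmiss$ gives $\tilde\mTrans = \Pi^\top \mTrans \Pi$. The main obstacle is the case $t_1 = 2$, where neither of the two stochastic factors equals $\mEmiss$: the third-factor identity $\tilde D \tilde\mEmiss = D \mEmiss \Pi$ only implies $\tilde\mEmiss = A \mEmiss \Pi$ for some positive diagonal $A$. Substituting into $\tilde\mEmiss \tilde\mTrans = \mEmiss\mTrans \Pi$ and using $\mEmiss^\dagger \mEmiss = I_k$ gives $\tilde\mTrans = \Pi^\top K \mTrans \Pi$ with $K := \mEmiss^\dagger A^{-1} \mEmiss$, and substituting into the remaining identity $\tilde\mEmiss \tilde\mTrans^\top = \mEmiss\mTrans^\top \Pi$ turns out to be consistent iff $A^{-1}$ preserves $\mathrm{col}(\mEmiss)$, i.e., $A^{-1}\mEmiss = \mEmiss K$.

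Finally I will show $A = I$. The row-stochasticity of $\tilde\mTrans$ forces $K\1 = \1$; applying $\mEmiss$ and using $A^{-1}\mEmiss = \mEmiss K$ yields
\[
  A^{-1} r = A^{-1}(\mEmiss\1) = \mEmiss(K\1) = \mEmiss\1 = r, \qquad r := \mEmiss \1.
\]
Since Assumption~\ref{assump:discrete_rowNE} gives $r_c > 0$ for every $c$, this forces $A_{c,c} = 1$ for all $c$, so $A = I$, $\tilde\mEmiss = \mEmiss\Pi$, and $\tilde\mTrans = \Pi^\top \mTrans \Pi$, concluding the theorem. The hardest bookkeeping is verifying that the three tensor identities, together with Assumption~\ref{assump:discrete_rowNE}, reduce precisely to the invariant-subspace property $A^{-1}\mathrm{col}(\mEmiss) \subseteq \mathrm{col}(\mEmiss)$, which is exactly what makes the row-sum argument $A^{-1}r = r$ go through.
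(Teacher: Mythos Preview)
Your proof is correct and follows the same overall strategy as the paper: build a $d \times d \times d$ tensor from the predictor evaluated at each $x_{t_1} \in [d]$, exhibit its rank-$k$ CP decomposition with the stated factor matrices, invoke Kruskal's theorem, and use column-stochasticity of two of the factors to kill the scaling ambiguity. For $t_1 \in \{1,3\}$ your extraction of $(\mEmiss,\mTrans)$ is identical to the paper's.

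For $t_1 = 2$ your route to $A = I$ is correct but more circuitous than necessary. The paper bypasses the invariant-subspace machinery entirely: from $\tilde\mEmiss\tilde\mTrans = \mEmiss\mTrans\Pi$ it simply applies both sides to $\1$ and uses $\mTrans\1 = \tilde\mTrans\1 = \1$ (Assumption~\ref{assump:latent}) to obtain $\tilde\mEmiss\1 = \mEmiss\1$ directly, i.e.\ $\tilde D = D$ in one line, whence $\tilde\mEmiss = \mEmiss\Pi$ from the third factor. Your chain $A^{-1}\mEmiss = \mEmiss K$, $K\1 = \1$, $A^{-1}r = r$ unwinds to exactly this statement, so nothing is lost, but the detour is avoidable. (Incidentally, the invariant-subspace property $A^{-1}\mEmiss = \mEmiss K$ already follows from the \emph{second} factor identity alone---both sides of $A\mEmiss\Pi\tilde\mTrans = \mEmiss\mTrans\Pi$ have full column rank $k$, forcing $\mathrm{col}(A\mEmiss) = \mathrm{col}(\mEmiss)$---so your ``iff'' attribution of it to the third identity is slightly off, though harmless for the argument.)
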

\vspace{-0.6em}
The difficulty compared to prior results on identifiability from third order moments ~\citep{allman2009identifiability,anandkumar2012multiview,anandkumar2014tensor} is that we only have access to the conditional 2-tensors (i.e. matrices) given by the predictors.  
The proof idea is to construct a third-order tensor by linearly combining the conditional 2-tensors for each possible value of the token being conditioned on, such that Kruskal's theorem can be applied to obtain identifiability. Note, importantly, that the weights for the linear combination cannot depend on the marginal probabilities of the token being conditioned on, as it is unclear whether there could be multiple singleton marginals consistent with the conditional probabilities we are predicting. Thus, the above theorem cannot be simply derived from results showing parameter identifiability from the 3rd order moments.

Using Lemma \ref{lem:gmm_phi_reflect},
this tensor decomposition argument can also be applied to \GHMM:
\ifx\coltSpace\undefined
\vspace{-0.6em}
\fi
\begin{theorem}[Identifiability from masked prediction on three tokens, \GHMM]
\label{thm:gmm}
    Let $(\tOne, \tTwo, \tThr)$ be any permutation of $(1,2,3)$, and consider the prediction task $x_{\tTwo} \otimes x_\tThr | x_\tOne$.
    Under Assumption \ref{assump:latent}, \ref{assump:gmm_full_rank}, \ref{assump:gmm_equal_norms},
    if the optimal predictors under the $\GHMM$ distributions with parameters $(\mMeans,\mTrans)$ and $(\tilde\mMeans, \tilde\mTrans)$ are the same, then $(\mMeans,\mTrans) = (\tilde\mMeans, \tilde\mTrans)$
    up to a permutation of the hidden state labels.
\end{theorem}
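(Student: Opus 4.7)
The plan is to parallel the tensor-decomposition strategy from Theorem~\ref{thm:discrete}, invoking Lemma~\ref{lem:gmm_phi_reflect} to handle the Householder/sign ambiguity specific to \GHMM. First I would write the optimal predictor explicitly. For $(t_1,t_2,t_3)=(1,2,3)$, using the conditional independence of the observations given the latents together with the Markov structure, one obtains
\[
  f^{2,3|1}(x) \;=\; \sum_{l=1}^{k} \psi_l(x)\,\mu_l \otimes (\mMeans\mTrans)_{\cdot,l},
\]
where $\psi_l(x):=P(h_2=l\mid x_1=x)=(\mTrans^\top\phi(x))_l$ is the posterior over $h_2$ given $x_1$. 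The other orderings of $(t_1,t_2,t_3)$ admit analogous forms via Assumption~\ref{assump:latent}, under which the reversed chain has transition $\mTrans^\top$.

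Next I would build a 3-tensor by evaluating the predictor at $k$ well-chosen inputs $x^{(1)},\dots,x^{(k)}\in\R^{d}$: set $V\in\R^{d\times d\times k}$ with $V_{\cdot,\cdot,i}:=f^{2,3|1}(x^{(i)})$, which yields the rank-$k$ decomposition
\[
  V \;=\; \sum_{l=1}^{k} \mu_l \otimes (\mMeans\mTrans)_{\cdot,l} \otimes \Psi_{\cdot,l}, \qquad \Psi_{i,l}:=\psi_l(x^{(i)}).
\]
Because $\phi=\softmax(\mMeans^\top x)$ is surjective onto the open simplex (using $\rank(\mMeans)=k$) and $\mTrans$ is invertible, the $x^{(i)}$'s can be chosen so that $\Psi$ is invertible; combined with the full column rank of $\mMeans$ and $\mMeans\mTrans$, all three components have Kruskal rank $k$, and Kruskal's theorem (Proposition~\ref{prop:kruskal}) applies whenever $k\ge 2$. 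Any alternative $(\tilde\mMeans,\tilde\mTrans)$ producing the same predictor produces the same $V$, so Kruskal gives a common permutation $\sigma$ and scalars with $\alpha_l\beta_l\gamma_l=1$, such that $\tilde\mu_l=\alpha_l\mu_{\sigma(l)}$, $(\tilde\mMeans\tilde\mTrans)_{\cdot,l}=\beta_l(\mMeans\mTrans)_{\cdot,\sigma(l)}$, and $\tilde\Psi_{\cdot,l}=\gamma_l\Psi_{\cdot,\sigma(l)}$.

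I would then peel off the scalings. The norm constraint $\|\tilde\mu_l\|=\|\mu_l\|=1$ forces $\alpha_l=\pm 1$; the columns of $\Psi$ are linearly independent and sum to $1$ row-wise, and $\tilde\Psi$ has the same row-sum property, forcing $\gamma_l=1$, hence $\beta_l=\pm 1$. After relabeling so that $\sigma=\mathrm{id}$, we have $\tilde\mMeans=\mMeans D$ with $D=\diag(\alpha_1,\dots,\alpha_k)\in\{\pm 1\}^k$. Separately, writing $f^{2,3|1}(x)=\sum_j\phi_j(x)W_j$ with $W_j:=\mMeans\diag(\mTrans_{\cdot,j})(\mMeans\mTrans)^\top$, the $W_j$ are linearly independent (by full column rank of $\mMeans$ and $\mMeans\mTrans$), so predictor matching at every $x$ makes $\tilde\phi$ a linear image of $\phi$; since both take values in the open simplex the linear map must be a permutation, which we absorb into the same relabeling to get $\tilde\phi=\phi$.

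The hard part, I expect, is reconciling the two conclusions $\tilde\mMeans=\mMeans D$ and (via Lemma~\ref{lem:gmm_phi_reflect}) $\tilde\mMeans\in\{\mMeans,\mH\mMeans\}$ to force $D=I$. Using the explicit form of $\mH$ from Lemma~\ref{lem:gmm_phi_reflect}, the $l$-th column of $\mH\mMeans$ is $\mu_l-2c\hat v$ with the strictly positive common projection $c=\hat v^\top\mu_l=1/\sqrt{\1^\top(\mMeans^\top\mMeans)^{-1}\1}$. Demanding $\mu_l-2c\hat v=D_{ll}\mu_l$ for every $l$ yields $c\hat v=0$ when $D_{ll}=+1$ (impossible since $c>0$ and $\hat v$ is a unit vector) and $\mu_l=c\hat v$ when $D_{ll}=-1$ (impossible since $\rank(\mMeans)=k\ge 2$). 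Hence $\mH\mMeans\ne\mMeans D$ for any signed diagonal, so $\tilde\mMeans=\mMeans$ and $D=I$. Then $(\tilde\mMeans\tilde\mTrans)_{\cdot,l}=(\mMeans\mTrans)_{\cdot,l}$ together with the full column rank of $\mMeans$ yields $\tilde\mTrans=\mMeans^+(\mMeans\mTrans)=\mTrans$. The remaining two orderings of $(t_1,t_2,t_3)$ follow by applying the same argument to the reversed chain.
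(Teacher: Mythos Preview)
Your treatment of the case $t_1=1$ (predict $x_2\otimes x_3\mid x_1$) is in the same spirit as the paper's---build a 3-tensor from evaluations of the predictor, apply Kruskal, then kill the residual scaling---though your handling of the sign ambiguity via Lemma~\ref{lem:gmm_phi_reflect} and the geometric incompatibility $\mH\mMeans\neq\mMeans D$ is a genuinely different (and arguably more careful) way to close the $\pm 1$ gap than the paper's one-line ``norms remove the scaling''. Two small wrinkles: (i) $\psi_l(x)=P(h_2=l\mid x_1=x)=(\mTrans\phi(x))_l$, not $(\mTrans^{\top}\phi(x))_l$; (ii) your ``absorb into the same relabeling'' step conflates two permutations---the one from Kruskal and the one from $\tilde\phi=R\phi$---so after forcing $\tilde\phi=\phi$ the Kruskal relation becomes $\tilde\mMeans=\mMeans D\Pi$ for some permutation $\Pi$, not simply $\mMeans D$. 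This is fixable (use that Kruskal with $\gamma_l=1$ also gives $\tilde\psi=\psi\circ\sigma$ \emph{for all} $x$, which together with $\tilde\mTrans=D\mTrans D$ and $\tilde\phi=R\phi$ forces a contradiction unless $D=I$), but as written the ``hard part'' is comparing two conclusions that do not hold under the same relabeling.

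The genuine gap is your last sentence. The case $t_1=2$ (predict $x_1\otimes x_3\mid x_2$) is \emph{not} obtained from $t_1=1$ by time reversal: it is fixed by reversal, and its optimal predictor has the different form $f^{1,3\mid 2}(x)=\sum_l\phi_l(x)\,(\mMeans\mTrans^{\top})_{\cdot,l}\otimes(\mMeans\mTrans)_{\cdot,l}$, so the Kruskal factors are columns of $\mMeans\mTrans^{\top}$ and $\mMeans\mTrans$, not of $\mMeans$ itself. Your norm argument ($\|\tilde\mu_l\|=1\Rightarrow\alpha_l=\pm1$) and your incompatibility argument ($\mH\mMeans\neq\mMeans D$) therefore do not apply, because $\mMeans$ never appears as a tensor component. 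The paper handles this case separately (Appendix~\ref{sec:proof_gmm_triplets}, Case~2): after Kruskal it uses Lemma~\ref{lem:gmm_matchPhi} to match posteriors, Lemma~\ref{lem:gmm_phi_reflect} to reduce to $\tilde\mMeans\in\{\mMeans,\mH\mMeans\}$, and then rules out $\mH\mMeans$ by the column-sum computation from the proof of Theorem~\ref{thm:gmm_2_1_id}. You would need an analogous separate argument; the reversed-chain remark covers only $t_1=3$.
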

\vspace{-0.5em}
We briefly remark that
the reason we only consider adjacent time steps is that when all tokens involved in the prediction task are at least two time steps apart,
matching predictors only matches powers (of more than one) of the transition matrices---which in general doesn't suffice to ensure the transition matrices themselves are matched.

\vspace{-0.6em}

\section{Proofs}
\label{sec:proofs}

This section includes proofs of some In Section \ref{sec:proof_hmm_id} we prove the identifiability of HMM parameters from the task of predicting two tokens (Theorem \ref{thm:discrete}) using ideas from tensor decomposition.
Then, Section \ref{sec:proof_hmm_nonId} provides an example showing the nonidentifiability of HMM from multiple pairwise predictions (Theorem \ref{thm:discrete_nonId}).
The identifiability of pairwise prediction on \GHMM~is proved in Section \ref{sec:proof_gmm_2_1_id},
and the rest of the proofs are deferred to the appendix.
\vspace{-1em}


\subsection{Proof of Theorem \ref{thm:discrete}: identifiability of predicting two tokens for HMM}
\label{sec:proof_hmm_id}




There are three cases for the two-token prediction task, i.e. 
    1) $x_2 \otimes x_3 | x_1$, 
    2) $x_1 \otimes x_3 | x_2$,
    and 3) $x_1 \otimes x_2 | x_3$.
We will prove for the first two cases, as the third case is proved the same way as the first case by symmetry.
In all cases, the idea is to use the predictor to construct a 3-tensor whose components are each of rank-$k$,
so that applying Kruskal's theorem gives identifiability.

\vspace{-0.6em}
\paragraph{Case 1, $x_2 \otimes x_3|x_1$:}
%
$\mEmiss,\mTrans$ and $\tilde\mEmiss, \tilde\mTrans$ producing the same predictor means $f^{2\otimes 3|1}(x_1) := \E[x_2 \otimes x_3|x_1] = \tilde{\E}[x_2 \otimes x_3|x_1] := \tilde{f}^{2\otimes 3|1}(x_1)$,
where $\E, \tilde\E$ are parameterized by the corresponding parameters.
Let $\gX := \{\ei: i\in[d]\}$, and consider the following 3-tensor:
\begin{equation}\label{eq:hmm_23_1_decomp}
\begin{split}
    \Tensor
    :=& \sum_{x_1 \in \gX} x_1 \otimes \E[x_2 \otimes x_3|x_1]
    = \sum_{x_1 \in \gX} x_1 \otimes \E_{h_2|x_1}[\E[x_2 \otimes x_3|x_1] | h_2]
    \\
    =& \sum_{x_1 \in \gX} x_1 \otimes \sum_{h_2} P(h_2|x_1) \E[x_2|h_2] \otimes \E[x_3|h_2]
    \\
    =& \sum_{i\in[k]} \sum_{x_1 \in \gX} P(h_2=i|x_1) x_1 \otimes \E[x_2|h_2=i] \otimes \E[x_3|h_2=i]
    \\
    =& \sum_{i\in[k]} \Big(\underbrace{\sum_{x_1 \in \gX} (\mTrans\phi(x_1))^\top \ei^{(k)} x_1}_{:= a_i} \Big) \otimes \mEmiss_i \otimes (\mEmiss\mTrans)_i,
    \\
\end{split}
\end{equation}
where $\mEmiss_i$ denotes the $i^{\text{th}}$ column of $\mEmiss$, and similarly for $(\mEmiss\mTrans)_i$.
Note that $\Tensor$ can also be written as 
\begin{equation}
\begin{split}
    \Tensor 
    =& \sum_{x_1 \in \gX} x_1 \otimes \tilde\E[x_2 \otimes x_3|x_1]
    = \sum_{i\in[k]} \Big(\sum_{x_1 \in \gX} (\tilde\mTrans \tilde\phi(x_1))^\top \ei^{(k)} x_1 \Big) \otimes \tilde\mEmiss_i \otimes (\tilde\mEmiss\tilde\mTrans)_i.
    \\ 
\end{split}
\end{equation}
We would like to apply Kruskal's theorem for identifiability.
In particular, we will show that each component in equation \ref{eq:hmm_23_1_decomp} forms a matrix of Kruskal rank $k$.
The second and third components clearly satisfy this condition by Assumption \ref{assump:discrete_full_rank}.
For the first component, write $a_i$ as
\vspace{-0.8em}
\begin{equation}
\begin{split}
    a_i
    =& \sum_{j\in [d]} \big(\mTrans\phi(\ej^{(d)})\big)^\top \ei^{(k)} \cdot \ej^{(d)}
    = \sum_{j \in [d]} \frac{(\ej^{(d)})^\top \mEmiss}{\|(\ej^{(d)})^\top\mEmiss\|_1} \mTrans^\top \ei^{(k)} \cdot \ej^{(d)}
    \\
    =& \diag\big([1/\|(\ej^{(d)})^\top\mEmiss\|_1]_{j \in [d]}\big) \mEmiss \mTrans^\top \ei^{(k)}.
    \\
\end{split}
\end{equation}
Putting $a_i$ into a matrix form, we get $\mA := [a_1, ..., a_k] = \diag\big([1/\|(\ej^{(d)})^\top\mEmiss\|_1]_{j \in [d]}\big) \mEmiss \mTrans^\top$,
\footnote{We use $[\alpha_i]_{i \in [d]}$ to denote a $d$-dimensional vector whose $i^{\text{th}}$ entry is $\alpha_i$.}
which is of rank $k$ by Assumption \ref{assump:discrete_full_rank}.
Hence components $\Tensor$ are all of Kruskal rank $k$, and columns of $\mEmiss\mTrans, \mEmiss$ are identified up to column-wise permutation and scaling by Kruskal's theorem.
The indeterminacy in scaling is further removed noting that columns of $\mEmiss, \mTrans$ need to sum up to 1.
Lastly, $\mTrans$ is recovered as $\mTrans = \mEmiss^\dagger\mEmiss\mTrans$.


%

\paragraph{Case 2, $x_1 \otimes x_3|x_2$:}
The optimal predictor for the task of predicting $x_1, x_3$ given $x_2$ takes the form
\begin{equation}
\begin{split}
    \E[x_1\otimes x_3|x_2] = (\mEmiss\mTrans^\top) \diag(\phi(x_2)) (\mEmiss\mTrans)^\top.
\end{split}
\end{equation}
Similarly as the previous case, we would like to construct a 3-tensor whose components can be uniquely determined by Kruskal's theorem.
Let $\gX$ be the same as before, and consider the 3-tensor
\begin{equation}
\begin{split}
    \Tensor
    :=& \sum_{x_2 \in \gX} x_2 \otimes \E[x_1\otimes x_3|x_2]
    = \sum_{x_2 \in \gX} x_2 \otimes \E_{h_2|x_2} (\E[x_1|h_2] \otimes \E[x_3|h_2])
    \\
    =& \sum_{i \in [k]} \underbrace{\sum_{x_2 \in \gX} (\phi(x_2))^\top \ei^{(k)} x_2}_{:= a_i} \otimes \E[x_1|h_2] \otimes \E[x_3|h_2]
    = \sum_{i \in [k]} a_i \otimes (\mEmiss\mTrans^\top)_i \otimes (\mEmiss\mTrans)_i,
\end{split}
\end{equation}
where the first component can be simplified to
\begin{equation}
\begin{split}
    a_i
    =& \sum_{j\in[d]} \frac{(\ej^{(d)})^\top \mEmiss}{\|(\ej^{(d)})^\top\mEmiss\|_1} \ei^{(k)} \cdot \ej^{(d)}
    = \Big(\diag\big([\|\mEmiss_j^\top\|_1]_{j \in [d]}\big)\Big)^{-1} \mEmiss \ei^{(k)}
    := \mD^{-1} \mEmiss \ei^{(k)}.
\end{split}
\end{equation}
The matrix $\mA := [a_1, ..., a_k] = \mD^{-1}\mEmiss$ is of rank $k$, hence we can identify (up to permutation) columns of each component of $\Tensor$ by Kruskal's theorem.
This means if $\mEmiss,\mTrans$ and $\tilde\mEmiss, \tilde\mTrans$ produce the same predictor,
then we have $\mEmiss\mTrans = \tilde\mEmiss\tilde\mTrans$, $\mEmiss\mTrans^\top = \tilde\mEmiss\tilde\mTrans^\top$,
and that $\mEmiss, \tilde\mEmiss$ are matched up to a scaling of rows (i.e. $\mD^{-1}$).
%
Next, to determine $\mD$, note that $\mTrans,\tilde\mTrans$ are doubly stochastic by Assumption \ref{assump:latent},
which means the all-one vector $\1 \in \R^{k}$ satisfies $\mTrans\1 = \tilde\mTrans\1 = \1$.
Hence $\tilde\mEmiss\tilde\mTrans\1 = \mEmiss\mTrans\1 = \mEmiss\1 = [\|\mEmiss_{j}^\top\|_1]_{j \in [d]}$.
We can then compute $\mD$ as $\mD = \diag(\mEmiss\mTrans\1)$,
and recover $\mEmiss$ as $\mEmiss = \mD\mA$.
Finally, $\mTrans$ is also recovered since
$\tilde\mEmiss\tilde\mTrans = \mEmiss\tilde\mTrans
= \mEmiss\mTrans
\Rightarrow \tilde\mTrans \mTrans^{-1} = \mI_k
\Rightarrow \tilde\mTrans = \mTrans$.


\subsection{Proof of Theorem \ref{thm:discrete_nonId}: non-identifiability of pairwise prediction tasks on HMM}
\label{sec:proof_hmm_nonId}

We provide an example to show the nonidentifiability result in Theorem \ref{thm:discrete_nonId}, which can also serve as an example for Theorem \ref{thm:discrete_2_1_nonId}.
The goal is to find $\tilde\mEmiss \neq \mEmiss$, $\tilde\mTrans \neq \mTrans$ that produce the same predictors for predicting both $x_2|x_1$ and $x_3|x_1$.
We will choose $\mTrans, \tilde\mTrans$ to be symmetric, so that $\mEmiss,\mTrans$ and $\tilde\mEmiss, \tilde\mTrans$ also form the same predictors for the reversed direction, i.e., for predicting $x_1$ given $x_2$ and $x_1$ given $x_3$, since the reverse chain has transition matrix $\mTrans^\top = \mTrans$.

Let's consider the case where
the all row sums of $\mEmiss$ and $\tilde\mEmiss$ are $k/d$.
Consequently, the posterior function is simply $\phi(x) = \frac{\mEmiss^\top x}{\|\mEmiss^\top x\|_1} = \frac{d}{k} \mEmiss^\top x$, and similarly we have $\tilde\phi(x) = \frac{d}{k}\tilde{\mEmiss}^\top x$.
The predictors are of the form:
\begin{equation}
\begin{split}
  f^{2|1}(x) = \mEmiss\mTrans\phi(x) = \frac{d}{k}\mEmiss\mTrans\mEmiss^\top x,~
  f^{3|1}(x) = \mEmiss\mTrans^2\phi(x) = \frac{d}{k}\mEmiss\mTrans^2 \mEmiss^\top x.
\end{split}
\end{equation}
Matching $f^{2|1}(x) = \tilde{f}^{2|1}(x)$ on all $x \in \gX := \{\ei\}_{i \in [d]}$ means
\begin{equation}\label{eq:31_21_part1}
\begin{split}
  &\mEmiss\mTrans\mEmiss^\top \mI_d = \mEmiss\mTrans\mEmiss^\top
  = \tilde\mEmiss \tilde\mTrans \tilde\mEmiss^\top 
  \Rightarrow \tilde\mTrans = \tilde\mEmiss^\dagger \mEmiss \cdot \mTrans \cdot (\tilde\mEmiss^\dagger \mEmiss)^{\top}.
  \\
\end{split}
\end{equation}
Similarly, matching $f^{3|1} = \tilde{f}^{3|1}$ gives $\mEmiss\mTrans^2\mEmiss^\top = \tilde\mEmiss \tilde\mTrans^2 \tilde\mEmiss^\top$, hence 
\begin{equation}\label{eq:discrete_nonid_31}
\begin{split}
  \tilde\mEmiss \tilde\mTrans^2 \tilde\mEmiss^\top
  =&~ \tilde{\mEmiss} \tilde\mEmiss^\dagger \mEmiss \mTrans (\tilde\mEmiss^\dagger \mEmiss)^\top \cdot \tilde\mEmiss^\dagger \mEmiss \mTrans (\tilde\mEmiss^\dagger \mEmiss)^\top \tilde\mEmiss^\top
  \\
  \eqLabel{1}=&~ \mEmiss \mTrans \cdot (\tilde\mEmiss^\dagger \mEmiss)^\top \tilde\mEmiss^\dagger \mEmiss \cdot \mTrans\mEmiss^\top 
  = \mEmiss \mTrans \cdot \mTrans\mEmiss
  \Rightarrow (\tilde\mEmiss^\dagger \mEmiss)^\top \cdot \tilde\mEmiss^\dagger \mEmiss = \mI_k,
\end{split}
\end{equation}
where step $\rnum{1}$ uses $\tilde\mEmiss\tilde\mEmiss^\dagger \mEmiss = \mEmiss$, since $\tilde\mEmiss, \mEmiss$ share the same column space.

Denote $\mR := \tilde\mEmiss^\dagger \mEmiss$; $\mR$ is orthogonal by the last equality in equation \ref{eq:discrete_nonid_31}.
To construct the desired example, consider $k=3$, and let $\mR$ represent a rotation with axis of rotation $\frac{1}{3}(e_1 + e_2 + e_3)$.
This axis is the direction pointing from the origin to the projection of the origin on the hyperplane $\gP_c := \{v \in \R^d: \sum_{i\in[d]} v_i = c\}$ for any positive constant $c$ (i.e. $\gP_c$ is parallel to the hyperplane in which probability vectors lie).
This means such rotation guarantees $\mR v \in \gP_c$, $\forall v \in \gP_c$, and has the property (proved in Appendix \ref{appendix:hmm_claim_rotSum}) that
\ifx\coltSpace\undefined
\vspace{-0.6em}
\fi
\begin{claim}
\label{clm:rotation_sumRowCol}
    Each row and each column of $\mR$ sums up to 1.
\end{claim}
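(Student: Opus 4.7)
The plan is to exploit the fact that the axis of rotation is parallel to the all-ones vector $\mathbf{1} \in \R^3$, and that any rotation fixes every vector along its axis. Since the axis of rotation is specified as $\tfrac{1}{3}(e_1 + e_2 + e_3) = \tfrac{1}{3}\mathbf{1}$, the vector $\mathbf{1}$ itself lies along the axis, so I would first record the key identity $\mR\mathbf{1} = \mathbf{1}$, which follows directly from the definition of ``rotation about the axis $\tfrac{1}{3}\mathbf{1}$''.

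From here the row-sum claim is immediate: the vector of row sums of $\mR$ is $\mR \mathbf{1}$, which equals $\mathbf{1}$ by the observation above. Hence every row of $\mR$ sums to $1$.

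For the column sums I would argue as follows. Because $\mR$ is a rotation it is orthogonal, so $\mR^\top = \mR^{-1}$. The inverse of a rotation about a given axis is itself a rotation about the \emph{same} axis (by the negated angle), so $\mR^\top$ also fixes the axis direction and therefore fixes $\mathbf{1}$. Consequently $\mR^\top \mathbf{1} = \mathbf{1}$, and the vector of column sums of $\mR$ is $\mR^\top \mathbf{1} = \mathbf{1}$, giving column sums equal to $1$. (Equivalently, one can note that $\mR$ leaves the line $\mathrm{span}(\mathbf{1})$ invariant, so by orthogonality it also leaves the orthogonal complement $\{v : \mathbf{1}^\top v = 0\}$ invariant, which is the same statement as $\mR^\top \mathbf{1} = \mathbf{1}$.)

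There is no real obstacle here: the claim is essentially a restatement of the defining property of a rotation about a prescribed axis, once one notices that $\mathbf{1}$ is a (positive) scalar multiple of the axis vector $\tfrac{1}{3}(e_1+e_2+e_3)$. The only care required is to ensure that ``axis of rotation'' is used in the standard sense of a pointwise-fixed line, so that $\mR\mathbf{1} = \mathbf{1}$ (and not $\mR\mathbf{1} = -\mathbf{1}$ or similar), and to invoke the symmetric fact for $\mR^\top$ via orthogonality.
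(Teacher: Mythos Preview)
Your proof is correct. The paper takes a slightly different route: instead of invoking directly that the axis vector $\mathbf{1}$ is fixed by $\mR$, it uses the (equivalent) hyperplane-invariance property stated just before the claim, namely that $\mR v \in \gP_c$ whenever $v \in \gP_c$. From $\sum_i \langle r_i, v\rangle = c$ for all $v \in \gP_c$ it parametrizes $v$ and solves for the column sums, concluding each equals $1$; then it derives the row-sum statement algebraically from $\mR^{-1}=\mR^\top$ via $\sum_i (\mR\mR^{-1})_{ij} = \langle \mathbf{1}, (\mR^{-1})_j\rangle = 1$. Your argument is more direct: $\mR\mathbf{1}=\mathbf{1}$ gives the row sums immediately, and $\mR^\top\mathbf{1}=\mathbf{1}$ (since $\mR^\top$ is a rotation about the same axis) gives the column sums. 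The paper's version has the minor advantage of tying back explicitly to the hyperplane property it needs in the surrounding construction, while yours is shorter and uses nothing beyond ``rotation fixes its axis'' plus orthogonality.
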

\vspace{-0.8em}
Define $\tilde\mEmiss := \mEmiss\mR$, $\tilde\mTrans := \mR^\top \mTrans \mR$, Claim \ref{clm:rotation_sumRowCol} ensures that row sum and column sum of $\tilde\mEmiss, \tilde\mTrans$ remain the same as those of $\mEmiss, \mTrans$.
When the rotation angle represented by $\mR$ is sufficiently small, entries $\tilde\mEmiss, \tilde\mTrans$ remain in $[0,1]$, hence such $\tilde\mEmiss, \tilde\mTrans$ form a valid example.
We provide a concrete example in Appendix \ref{appendix:hmm_eg_12_21_13_31}.

\subsection{Proof of Theorem \ref{thm:gmm_2_1_id}: identifiability of predicting $x_2$ given $x_1$ for \GHMM}
\label{sec:proof_gmm_2_1_id}


%
    For \GHMM, the predictor for $x_2$ given $x_1$ is parameterized as $f^{2|1}(x_1) = \E[x_2|x_1] = \mMeans\mTrans \phi(x_1)$.
    If $\mMeans,\mTrans$ and $\tilde\mMeans, \tilde\mTrans$ produce the same predictor, then 
    \begin{equation}
    \begin{split}
        f^{2|1}(x) = \mMeans\mTrans \phi(x)
        = \tilde\mMeans \tilde\mTrans \tilde\phi(x)
        = \tilde{f}^{2|1}(x), ~\forall x \in \R^d.
    \end{split}
    \end{equation}
    Let $\mR := (\tilde\mMeans\tilde\mTrans)^\dagger (\mMeans\mTrans) \in \R^{k \times k}$, then $\tilde\phi(x) = \mR\phi(x)$.
    The following lemma (proof deferred to Appendix \ref{sec:gmm_helper_lem}) says that $\phi, \tilde\phi$ must then be equal up to a permutation of coordinates:
    \vspace{-0.8em}
    \begin{lemma}
    \label{lem:gmm_matchPhi}
        If there exists a non-singular matrix $\mR \in \R^{k \times k}$ such that 
        $\phi(x) = \mR\tilde\phi(x)$, $\forall x \in \R^d$, then $\mR$ must be a permutation matrix.
    \end{lemma}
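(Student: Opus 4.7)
The plan is to exploit the fact that both $\phi$ and $\tilde\phi$ are probability-vector-valued maps whose images approach every vertex of the simplex $\Delta^{k-1}$. The identity $\phi = \mR\tilde\phi$ will then force $\mR$ to carry the simplex into itself, and the analogous statement will hold for $\mR^{-1}$; a convex geometry argument then finishes the proof.

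First I would establish that $\tilde\phi(n\tilde\mu_i) \to e_i$ as $n \to \infty$, for each $i \in [k]$. Using Assumption~\ref{assump:latent} to give a uniform stationary prior, the posterior admits the explicit form
\begin{equation*}
    \tilde\phi(x)_j = \frac{\exp(-\|x-\tilde\mu_j\|^2/2)}{\sum_{l \in [k]}\exp(-\|x-\tilde\mu_l\|^2/2)}.
\end{equation*}
Plugging in $x = n\tilde\mu_i$ and using $\|\tilde\mu_j\| = 1$ from Assumption~\ref{assump:gmm_equal_norms},
\begin{equation*}
    \|n\tilde\mu_i - \tilde\mu_j\|^2 \;=\; (n-1)^2 + 2n(1 - \tilde\mu_i^\top \tilde\mu_j),
\end{equation*}
where $\tilde\mu_i^\top \tilde\mu_j < 1$ for $j \neq i$ since the $\tilde\mu_j$ are distinct unit vectors (distinctness follows from $\rank(\tilde\mMeans) = k$ in Assumption~\ref{assump:gmm_full_rank}). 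Hence the off-diagonal exponents exceed the diagonal one by $\Omega(n)$, so every $j \neq i$ component of $\tilde\phi(n\tilde\mu_i)$ decays to zero while the $i$-th tends to $1$. By symmetry, $\phi(n\mu_j) \to e_j$ for each $j \in [k]$.

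Next I would pin down the structure of $\mR$. By linearity of $\mR$ and the identity $\phi(x) = \mR\tilde\phi(x)$,
\begin{equation*}
    \mR e_i \;=\; \lim_{n \to \infty} \mR\tilde\phi(n\tilde\mu_i) \;=\; \lim_{n \to \infty} \phi(n\tilde\mu_i),
\end{equation*}
and since each $\phi(n\tilde\mu_i)$ lies in the closed simplex $\Delta^{k-1}$, so does the limit. Thus every column of $\mR$ is a probability vector, i.e., $\mR$ is column-stochastic. Applying the same argument to $\tilde\phi(y) = \mR^{-1}\phi(y)$ along $y_n = n\mu_j$ shows $\mR^{-1}$ is also column-stochastic.

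Finally, a convex geometry argument closes the proof. Column-stochasticity together with linearity yields $\mR(\Delta^{k-1}) \subseteq \Delta^{k-1}$ and $\mR^{-1}(\Delta^{k-1}) \subseteq \Delta^{k-1}$; combining these gives $\mR(\Delta^{k-1}) = \Delta^{k-1}$. So $\mR$ restricts to an invertible affine bijection of the simplex onto itself, which must permute its extreme points $\{e_1,\dots,e_k\}$. Hence $\mR e_i = e_{\pi(i)}$ for some permutation $\pi$, and $\mR$ is a permutation matrix. The main obstacle I anticipate is the limit computation in the first step, which is where both the unit-norm assumption and the distinctness from the rank condition on $\tilde\mMeans$ come into play to ensure the posterior concentrates on a single coordinate along the ray $n\tilde\mu_i$; once this is established, the remaining arguments are elementary.
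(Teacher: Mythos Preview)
Your proof is correct and takes a genuinely different (and more elementary) route than the paper. Both arguments begin with the same limit computation, namely that $\tilde\phi(c\tilde\mu_i)\to e_i$ and $\phi(c\mu_i)\to e_i$ as $c\to\infty$, relying on the equal-norm assumption to guarantee $\tilde\mu_i^\top(\tilde\mu_i-\tilde\mu_j)>0$. After that, the paper matches the \emph{Jacobians} $\nabla_x\phi$ and $\nabla_x(\mR\tilde\phi)$ at $x=c\mu_i$, reduces to the matrix identity $\diag(\mR_i)-\mR_i\mR_i^\top=0$ (after arguing that $\tilde\Delta=\tilde\mMeans-[x,\dots,x]$ is full rank via an auxiliary affine-combination claim), and concludes each column of $\mR$ is a standard basis vector. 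You instead observe directly that $\mR e_i=\lim\phi(n\tilde\mu_i)\in\Delta^{k-1}$ and likewise $\mR^{-1}e_j\in\Delta^{k-1}$, so both $\mR$ and $\mR^{-1}$ are column-stochastic; the convex-geometry fact that an invertible linear map carrying the simplex onto itself must permute its vertices then finishes. Your approach avoids all derivative computations and the rank argument for $\tilde\Delta$, at the cost of invoking a (standard) fact about affine automorphisms of the simplex; the paper's Jacobian route is more hands-on but heavier. Either way the key analytic input is identical: the posterior concentrates along rays through the means.
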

    \vspace{-0.8em}
    Combined with Lemma \ref{lem:gmm_phi_reflect}, we have $\tilde\mMeans$ is equal to (up to a permutation) either $\mMeans$ or $\mH\mMeans$, where $\mH$ is the Householder reflection given in Lemma \ref{lem:gmm_phi_reflect}.
    
    The remaining step is to show that $\mH\mMeans$ can be ruled out by requiring $\tilde\mTrans$ to be a stochastic matrix.
    Note that matching both the predictor and the posterior function means we also have $\tilde\mMeans\tilde\mTrans = \mMeans\mTrans$, or $\tilde\mTrans = (\tilde\mMeans^{\dagger}\mMeans) \mTrans$.
    Recall that $\mH := \mI_d - 2\hat{v}\hat{v}^\top$
    for $\hat{v} = \frac{(\mMeans^{\dagger})^\top\1}{\sqrt{\1^\top\mMeans^{\dagger}(\mMeans^\dagger){\top}\1}}$.
    When $\tilde\mMeans = \mH\tilde\mMeans$, the column sum of $\tilde\mMeans^{\dagger}\mMeans$ is
    \begin{equation}
    \begin{split}
        \1^\top \tilde\mMeans^{\dagger} \mMeans
        =& \1^\top \mMeans^{\dagger}\mH^{-1}\mMeans
        = \1^\top \mMeans^{\dagger} (\mI - 2\hat{v}\hat{v}^\top) \mMeans
        \\
        =& \1^\top(\mI - 2\mMeans^{\dagger}\hat{v}\hat{v}^\top \mMeans)
        = \1^\top - 2 \cdot \1^\top \frac{\mMeans^{\dagger} (\mMeans^\dagger)^{\top} \1 \1^\top \mMeans^{\dagger} \mMeans}{\1^\top\mMeans^{\dagger}(\mMeans^\dagger)^{\top}\1} 
        \\
        =& \1^\top - 2\cdot \frac{\1^\top \mMeans^{\dagger} (\mMeans^\dagger)^{\top} \1}{\1^\top\mMeans^{\dagger} (\mMeans^\dagger)^{\top}\1} \1^\top 
        = \1^\top - 2\cdot \1^\top
        = -\1^\top.
    \end{split}
    \end{equation}
    This means the column sum of $\tilde\mTrans$ is 
    $\1^\top \tilde\mTrans
    = \1^\top (\tilde\mMeans^{\dagger}\mMeans) \mTrans
    = -\1^\top \mTrans
    = -\1^\top$,
    which violates the constraint that $\tilde\mTrans$ should be a stochastic matrix with positive entries and column sum 1.
    Hence it must be that $\mMeans = \tilde\mMeans$ and hence also $\mTrans = \tilde\mTrans$ (up to permutation), proving the theorem statement.

%

\section{Related works}


\paragraph{Self-supervised learning}
On the empirical side, self-supervised methods have gained a great amount of popularity across many domains,
including language understanding~\citep{word2vec,vaswani2017attention,bert}, 
visual understanding~\citep{relPos15,inpaint16},
and distribution learning~\citep{nce,gao2020flow}.
Classic ideas such as contrastive and masked prediction remain powerful in their modern realizations~\citep{CPCv2,simclr,iGPT,MAE21}, pushing the state of the art performance and even surpassing supervised pretraining in various aspects~\citep{lee2021compressive,liu2021robust}.

On the theoretical front, there have been analyses on both masked predictions~\citep{predictLee20} and contrastive methods~\citep{arora2019theoretical,multiViewTosh20,topicTosh21,WangIsola20,haochen2021provable,Zixin20} though with a focus on characterizing the quality of the learned features for downstream tasks \citep{Saunshi20LM,wei2021pretrained}. These approaches usually rely on quite strong assumptions to tie the self-supervised learning objective to the downstream tasks of interest.
However, our results show that the specific parametric form is key to identifiability.
While the parameter recovery lens is a new contribution of our work, \cite{Zixin20} argue (as a side-product of their analysis) that some aspects of a generative model are recovered in their setup.
Their data model, however, is substantially different from ours and has very different identifiability properties (owing to its basis in sparse coding).

\paragraph{Latent variable models and tensor methods}
Latent variable models have been widely studied in the literature.
One important area of research is independent component analysis (ICA),
where the data are assumed to be given as a transformation (mixing) of unknown independent sources which ICA methods aim to identify. In nonlinear ICA data models, both the sources and the mixing function are generally not identifiable.  \cite{TCL,PCL} however, under some additional assumptions
(on the dependency structure of the different time steps, among other things),
provide some identifiability results on the sources. Unlike our setup, the mixing function in these models is deterministic and also not the object of recovery.

More related to this work is the line of work on learning latent variable models with tensor methods.
Specific to learning HMMs, \citet{mossel2005learning} and \citet{anandkumar2012multiview,anandkumar2014tensor} provide algorithms based on third-order moments.
A major difference between these prior works on tensor methods and ours is that previous results operate on joint moments, while the results in this work are based on conditional moments given by the optimal predictors for the masked tokens.

\section{Conclusion}


In this work, we initiated a study of masked prediction, an empirically successful self-supervised learning method, through the lens parameter recovery.
By studying various prediction tasks on HMM and a conditionally-Gaussian variant (\GHMM), we showed that parameter recovery can be very sensitive to the interaction between the data model, prediction task, and concomitant predictors.
We hope this work inspires further research, as many tantalizing open questions remain. 
For instance, how does changing the number of either predicted or conditioned-on tokens affect identifiability? Are certain tasks ``better'' than others---e.g., in terms of robustness or sample complexity? Can one derive similar results for more general families of HMMs?


\bibliography{references.bib}

\begin{thebibliography}{49}
\providecommand{\natexlab}[1]{#1}
\providecommand{\url}[1]{\texttt{#1}}
\expandafter\ifx\csname urlstyle\endcsname\relax
  \providecommand{\doi}[1]{doi: #1}\else
  \providecommand{\doi}{doi: \begingroup \urlstyle{rm}\Url}\fi

\bibitem[Aharon et~al.(2006)Aharon, Elad, and Bruckstein]{aharon2006uniqueness}
M.~Aharon, M.~Elad, and A.~M. Bruckstein.
\newblock On the uniqueness of overcomplete dictionaries, and a practical way
  to retrieve them.
\newblock \emph{Linear algebra and its applications}, 416\penalty0
  (1):\penalty0 48--67, 2006.

\bibitem[Allman et~al.(2009)Allman, Matias, and
  Rhodes]{allman2009identifiability}
E.~S. Allman, C.~Matias, and J.~A. Rhodes.
\newblock Identifiability of parameters in latent structure models with many
  observed variables.
\newblock \emph{The Annals of Statistics}, 37\penalty0 (6A):\penalty0
  3099--3132, 2009.

\bibitem[Anandkumar et~al.(2012)Anandkumar, Hsu, and
  Kakade]{anandkumar2012multiview}
A.~Anandkumar, D.~Hsu, and S.~M. Kakade.
\newblock A method of moments for mixture models and hidden markov models.
\newblock In \emph{Conference on Learning Theory}, pages 33--1. JMLR Workshop
  and Conference Proceedings, 2012.

\bibitem[Anandkumar et~al.(2014)Anandkumar, Ge, Hsu, Kakade, and
  Telgarsky]{anandkumar2014tensor}
A.~Anandkumar, R.~Ge, D.~Hsu, S.~M. Kakade, and M.~Telgarsky.
\newblock Tensor decompositions for learning latent variable models.
\newblock \emph{Journal of machine learning research}, 15:\penalty0 2773--2832,
  2014.

\bibitem[Arora et~al.(2014)Arora, Ge, and Moitra]{arora2014new}
S.~Arora, R.~Ge, and A.~Moitra.
\newblock New algorithms for learning incoherent and overcomplete dictionaries.
\newblock In \emph{Conference on Learning Theory}, pages 779--806. PMLR, 2014.

\bibitem[Arora et~al.(2019)Arora, Khandeparkar, Khodak, Plevrakis, and
  Saunshi]{arora2019theoretical}
S.~Arora, H.~Khandeparkar, M.~Khodak, O.~Plevrakis, and N.~Saunshi.
\newblock A theoretical analysis of contrastive unsupervised representation
  learning.
\newblock \emph{arXiv preprint arXiv:1902.09229}, 2019.

\bibitem[Bresler(2015)]{bresler2015efficiently}
G.~Bresler.
\newblock Efficiently learning ising models on arbitrary graphs.
\newblock In \emph{Proceedings of the forty-seventh annual ACM symposium on
  Theory of computing}, pages 771--782, 2015.

\bibitem[Candes et~al.(2006)Candes, Romberg, and Tao]{candes2006stable}
E.~J. Candes, J.~K. Romberg, and T.~Tao.
\newblock Stable signal recovery from incomplete and inaccurate measurements.
\newblock \emph{Communications on Pure and Applied Mathematics: A Journal
  Issued by the Courant Institute of Mathematical Sciences}, 59\penalty0
  (8):\penalty0 1207--1223, 2006.

\bibitem[Chen et~al.(2020{\natexlab{a}})Chen, Radford, Child, Wu, Jun, Luan,
  and Sutskever]{iGPT}
M.~Chen, A.~Radford, R.~Child, J.~Wu, H.~Jun, D.~Luan, and I.~Sutskever.
\newblock Generative pretraining from pixels.
\newblock In \emph{International Conference on Machine Learning}, pages
  1691--1703. PMLR, 2020{\natexlab{a}}.

\bibitem[Chen et~al.(2020{\natexlab{b}})Chen, Kornblith, Norouzi, and
  Hinton]{simclr}
T.~Chen, S.~Kornblith, M.~Norouzi, and G.~Hinton.
\newblock A simple framework for contrastive learning of visual
  representations.
\newblock \emph{arXiv preprint arXiv:2002.05709}, 2020{\natexlab{b}}.

\bibitem[Cohen and Gillis(2019)]{cohen2019identifiability}
J.~E. Cohen and N.~Gillis.
\newblock Identifiability of complete dictionary learning.
\newblock \emph{SIAM Journal on Mathematics of Data Science}, 1\penalty0
  (3):\penalty0 518--536, 2019.

\bibitem[Deng et~al.(2009)Deng, Dong, Socher, Li, Li, and
  Fei-Fei]{deng2009imagenet}
J.~Deng, W.~Dong, R.~Socher, L.-J. Li, K.~Li, and L.~Fei-Fei.
\newblock Imagenet: A large-scale hierarchical image database.
\newblock In \emph{2009 IEEE conference on computer vision and pattern
  recognition}, pages 248--255. Ieee, 2009.

\bibitem[Devlin et~al.(2018)Devlin, Chang, Lee, and Toutanova]{bert}
J.~Devlin, M.-W. Chang, K.~Lee, and K.~Toutanova.
\newblock Bert: Pre-training of deep bidirectional transformers for language
  understanding.
\newblock \emph{arXiv preprint arXiv:1810.04805}, 2018.

\bibitem[Doersch et~al.(2015)Doersch, Gupta, and Efros]{relPos15}
C.~Doersch, A.~Gupta, and A.~A. Efros.
\newblock Unsupervised visual representation learning by context prediction.
\newblock In \emph{Proceedings of the IEEE International Conference on Computer
  Vision}, pages 1422--1430, 2015.

\bibitem[Donoho and Elad(2003)]{donoho2003optimally}
D.~L. Donoho and M.~Elad.
\newblock Optimally sparse representation in general (nonorthogonal)
  dictionaries via l1 minimization.
\newblock \emph{Proceedings of the National Academy of Sciences}, 100\penalty0
  (5):\penalty0 2197--2202, 2003.

\bibitem[Gao et~al.(2020)Gao, Nijkamp, Kingma, Xu, Dai, and Wu]{gao2020flow}
R.~Gao, E.~Nijkamp, D.~P. Kingma, Z.~Xu, A.~M. Dai, and Y.~N. Wu.
\newblock Flow contrastive estimation of energy-based models.
\newblock In \emph{Proceedings of the IEEE/CVF Conference on Computer Vision
  and Pattern Recognition}, pages 7518--7528, 2020.

\bibitem[Georgiev et~al.(2005)Georgiev, Theis, and
  Cichocki]{georgiev2005sparse}
P.~Georgiev, F.~Theis, and A.~Cichocki.
\newblock Sparse component analysis and blind source separation of
  underdetermined mixtures.
\newblock \emph{IEEE transactions on neural networks}, 16\penalty0
  (4):\penalty0 992--996, 2005.

\bibitem[Gutmann and Hyv{\"a}rinen(2010)]{nce}
M.~Gutmann and A.~Hyv{\"a}rinen.
\newblock Noise-contrastive estimation: A new estimation principle for
  unnormalized statistical models.
\newblock In \emph{Proceedings of the Thirteenth International Conference on
  Artificial Intelligence and Statistics}, pages 297--304, 2010.

\bibitem[HaoChen et~al.(2021)HaoChen, Wei, Gaidon, and Ma]{haochen2021provable}
J.~Z. HaoChen, C.~Wei, A.~Gaidon, and T.~Ma.
\newblock Provable guarantees for self-supervised deep learning with spectral
  contrastive loss.
\newblock \emph{Advances in Neural Information Processing Systems}, 34, 2021.

\bibitem[Harshman(1970)]{Jennrich70}
R.~Harshman.
\newblock Foundations of the parafac procedure: Model and conditions for an
  explanatory factor analysis.
\newblock \emph{Technical Report UCLA Working Papers in Phonetics 16,
  University of California, Los Angeles, Los Angeles, CA}, 1970.

\bibitem[He et~al.(2021)He, Chen, Xie, Li, Doll{\'a}r, and Girshick]{MAE21}
K.~He, X.~Chen, S.~Xie, Y.~Li, P.~Doll{\'a}r, and R.~Girshick.
\newblock Masked autoencoders are scalable vision learners.
\newblock \emph{arXiv preprint arXiv:2111.06377}, 2021.

\bibitem[H{\'e}naff et~al.(2019)H{\'e}naff, Srinivas, De~Fauw, Razavi, Doersch,
  Eslami, and Oord]{CPCv2}
O.~J. H{\'e}naff, A.~Srinivas, J.~De~Fauw, A.~Razavi, C.~Doersch, S.~Eslami,
  and A.~v.~d. Oord.
\newblock Data-efficient image recognition with contrastive predictive coding.
\newblock \emph{arXiv preprint arXiv:1905.09272}, 2019.

\bibitem[Hewitt and Manning(2019)]{hewitt2019structural}
J.~Hewitt and C.~D. Manning.
\newblock A structural probe for finding syntax in word representations.
\newblock In \emph{Proceedings of the 2019 Conference of the North American
  Chapter of the Association for Computational Linguistics: Human Language
  Technologies, Volume 1 (Long and Short Papers)}, pages 4129--4138, 2019.

\bibitem[Hitchcock(1927)]{hitchcock1927expression}
F.~L. Hitchcock.
\newblock The expression of a tensor or a polyadic as a sum of products.
\newblock \emph{Journal of Mathematics and Physics}, 6\penalty0 (1-4):\penalty0
  164--189, 1927.

\bibitem[Hyvarinen and Morioka(2016)]{TCL}
A.~Hyvarinen and H.~Morioka.
\newblock Unsupervised feature extraction by time-contrastive learning and
  nonlinear ica.
\newblock In \emph{Advances in Neural Information Processing Systems}, pages
  3765--3773, 2016.

\bibitem[Hyvarinen and Morioka(2017)]{PCL}
A.~Hyvarinen and H.~Morioka.
\newblock {Nonlinear ICA of Temporally Dependent Stationary Sources}.
\newblock In A.~Singh and J.~Zhu, editors, \emph{Proceedings of the 20th
  International Conference on Artificial Intelligence and Statistics},
  volume~54 of \emph{Proceedings of Machine Learning Research}, pages 460--469.
  PMLR, 20--22 Apr 2017.
\newblock URL \url{https://proceedings.mlr.press/v54/hyvarinen17a.html}.

\bibitem[Kruskal(1977)]{kruskal77}
J.~B. Kruskal.
\newblock Three-way arrays: rank and uniqueness of trilinear decompositions,
  with application to arithmetic complexity and statistics.
\newblock \emph{Linear algebra and its applications}, 18\penalty0 (2):\penalty0
  95--138, 1977.

\bibitem[Lee et~al.(2020)Lee, Lei, Saunshi, and Zhuo]{predictLee20}
J.~D. Lee, Q.~Lei, N.~Saunshi, and J.~Zhuo.
\newblock Predicting what you already know helps: Provable self-supervised
  learning, 2020.

\bibitem[Lee et~al.(2021)Lee, Arnab, Guadarrama, Canny, and
  Fischer]{lee2021compressive}
K.-H. Lee, A.~Arnab, S.~Guadarrama, J.~Canny, and I.~Fischer.
\newblock Compressive visual representations.
\newblock \emph{Advances in Neural Information Processing Systems}, 34, 2021.

\bibitem[Lindsay and Basak(1993)]{lindsay93}
B.~G. Lindsay and P.~Basak.
\newblock Multivariate normal mixtures: a fast consistent method of moments.
\newblock \emph{Journal of the American Statistical Association}, 88\penalty0
  (422):\penalty0 468--476, 1993.

\bibitem[Liu et~al.(2021)Liu, HaoChen, Gaidon, and Ma]{liu2021robust}
H.~Liu, J.~Z. HaoChen, A.~Gaidon, and T.~Ma.
\newblock Self-supervised learning is more robust to dataset imbalance.
\newblock \emph{arXiv preprint arXiv:2110.05025}, 2021.

\bibitem[Mikolov et~al.(2013)Mikolov, Chen, Corrado, and Dean]{word2vec}
T.~Mikolov, K.~Chen, G.~Corrado, and J.~Dean.
\newblock Efficient estimation of word representations in vector space.
\newblock \emph{arXiv preprint arXiv:1301.3781}, 2013.

\bibitem[Mossel and Roch(2005)]{mossel2005learning}
E.~Mossel and S.~Roch.
\newblock Learning nonsingular phylogenies and hidden markov models.
\newblock In \emph{Proceedings of the thirty-seventh annual ACM symposium on
  Theory of computing}, pages 366--375, 2005.

\bibitem[Pathak et~al.(2016)Pathak, Krahenbuhl, Donahue, Darrell, and
  Efros]{inpaint16}
D.~Pathak, P.~Krahenbuhl, J.~Donahue, T.~Darrell, and A.~A. Efros.
\newblock Context encoders: Feature learning by inpainting.
\newblock In \emph{Proceedings of the IEEE conference on computer vision and
  pattern recognition}, pages 2536--2544, 2016.

\bibitem[Radford et~al.()Radford, Wu, Child, Luan, Amodei, and Sutskever]{GPT2}
A.~Radford, J.~Wu, R.~Child, D.~Luan, D.~Amodei, and I.~Sutskever.
\newblock Language models are unsupervised multitask learners.

\bibitem[Ravikumar et~al.(2010)Ravikumar, Wainwright, and
  Lafferty]{ravikumar2010high}
P.~Ravikumar, M.~J. Wainwright, and J.~D. Lafferty.
\newblock High-dimensional ising model selection using l1-regularized logistic
  regression.
\newblock \emph{The Annals of Statistics}, 38\penalty0 (3):\penalty0
  1287--1319, 2010.

\bibitem[Saunshi et~al.(2020)Saunshi, Malladi, and Arora]{Saunshi20LM}
N.~Saunshi, S.~Malladi, and S.~Arora.
\newblock A mathematical exploration of why language models help solve
  downstream tasks.
\newblock In \emph{International Conference on Learning Representations}, 2020.

\bibitem[Spielman et~al.(2012)Spielman, Wang, and Wright]{spielman2012exact}
D.~A. Spielman, H.~Wang, and J.~Wright.
\newblock Exact recovery of sparsely-used dictionaries.
\newblock In \emph{Conference on Learning Theory}, pages 37--1. JMLR Workshop
  and Conference Proceedings, 2012.

\bibitem[Tamkin et~al.(2021)Tamkin, Liu, Lu, Fein, Schultz, and
  Goodman]{tamkin2021dabs}
A.~Tamkin, V.~Liu, R.~Lu, D.~Fein, C.~Schultz, and N.~Goodman.
\newblock Dabs: A domain-agnostic benchmark for self-supervised learning.
\newblock \emph{arXiv preprint arXiv:2111.12062}, 2021.

\bibitem[Tosh et~al.(2020{\natexlab{a}})Tosh, Krishnamurthy, and
  Hsu]{multiViewTosh20}
C.~Tosh, A.~Krishnamurthy, and D.~Hsu.
\newblock Contrastive learning, multi-view redundancy, and linear models,
  2020{\natexlab{a}}.

\bibitem[Tosh et~al.(2020{\natexlab{b}})Tosh, Krishnamurthy, and
  Hsu]{topicTosh21}
C.~Tosh, A.~Krishnamurthy, and D.~Hsu.
\newblock Contrastive estimation reveals topic posterior information to linear
  models.
\newblock \emph{arXiv preprint arXiv:2003.02234}, 2020{\natexlab{b}}.

\bibitem[Vaswani et~al.(2017)Vaswani, Shazeer, Parmar, Uszkoreit, Jones, Gomez,
  Kaiser, and Polosukhin]{vaswani2017attention}
A.~Vaswani, N.~Shazeer, N.~Parmar, J.~Uszkoreit, L.~Jones, A.~N. Gomez,
  {\L}.~Kaiser, and I.~Polosukhin.
\newblock Attention is all you need.
\newblock In \emph{Advances in neural information processing systems}, pages
  5998--6008, 2017.

\bibitem[Vuffray et~al.(2016)Vuffray, Misra, Lokhov, and
  Chertkov]{vuffray2016interaction}
M.~Vuffray, S.~Misra, A.~Lokhov, and M.~Chertkov.
\newblock Interaction screening: Efficient and sample-optimal learning of ising
  models.
\newblock \emph{Advances in neural information processing systems}, 29, 2016.

\bibitem[Wang et~al.(2018)Wang, Singh, Michael, Hill, Levy, and
  Bowman]{wang2018glue}
A.~Wang, A.~Singh, J.~Michael, F.~Hill, O.~Levy, and S.~R. Bowman.
\newblock Glue: A multi-task benchmark and analysis platform for natural
  language understanding.
\newblock \emph{arXiv preprint arXiv:1804.07461}, 2018.

\bibitem[Wang et~al.(2019)Wang, Pruksachatkun, Nangia, Singh, Michael, Hill,
  Levy, and Bowman]{wang2019superglue}
A.~Wang, Y.~Pruksachatkun, N.~Nangia, A.~Singh, J.~Michael, F.~Hill, O.~Levy,
  and S.~Bowman.
\newblock Superglue: A stickier benchmark for general-purpose language
  understanding systems.
\newblock \emph{Advances in neural information processing systems}, 32, 2019.

\bibitem[Wang and Isola(2020)]{WangIsola20}
T.~Wang and P.~Isola.
\newblock Understanding contrastive representation learning through alignment
  and uniformity on the hypersphere.
\newblock \emph{arXiv preprint arXiv:2005.10242}, 2020.

\bibitem[Wei et~al.(2021)Wei, Xie, and Ma]{wei2021pretrained}
C.~Wei, S.~M. Xie, and T.~Ma.
\newblock Why do pretrained language models help in downstream tasks? an
  analysis of head and prompt tuning.
\newblock \emph{Advances in Neural Information Processing Systems}, 34, 2021.

\bibitem[Wen and Li(2021)]{Zixin20}
Z.~Wen and Y.~Li.
\newblock Toward understanding the feature learning process of self-supervised
  contrastive learning.
\newblock \emph{International Conference on Machine Learning}, page
  11112–11122, 2021.

\bibitem[Zhai et~al.(2019)Zhai, Puigcerver, Kolesnikov, Ruyssen, Riquelme,
  Lucic, Djolonga, Pinto, Neumann, Dosovitskiy, et~al.]{VTAB}
X.~Zhai, J.~Puigcerver, A.~Kolesnikov, P.~Ruyssen, C.~Riquelme, M.~Lucic,
  J.~Djolonga, A.~S. Pinto, M.~Neumann, A.~Dosovitskiy, et~al.
\newblock A large-scale study of representation learning with the visual task
  adaptation benchmark.
\newblock \emph{arXiv preprint arXiv:1910.04867}, 2019.

\end{thebibliography}
\bibliographystyle{abbrvnat}

\newpage

\appendix

\section{Missing proofs for \GHMM}
\label{appendix:gmm}

This section provides missing proofs for results on \GHMM.
We will first prove the two lemmas on properties of the posterior function (Lemma \ref{lem:gmm_phi_reflect}, \ref{lem:gmm_matchPhi}),
then show the proof for the three-token prediction task (Theorem \ref{thm:gmm}) using a tensor decomposition idea similar to that of Theorem \ref{thm:discrete}.
At the end, we show the identifiability from pairwise conditional distributions (as opposed to conditional expectation as in masked prediction tasks),
which is proved by reducing parameter recovery to the identifiability of Gaussian mixtures (Theorem \ref{thm:gmm_conditional}).

\subsection{Proofs of helper lemmas}
\label{sec:gmm_helper_lem}

\subsubsection{Proof for Lemma \ref{lem:gmm_matchPhi}}
\label{sec:proof_gmm_matchPhi}

\begin{lemma*}[Lemma \ref{lem:gmm_matchPhi} restated]
    If there exists a non-singular matrix $\mR \in \R^{k \times k}$ such that 
    $\phi(x) = \mR\tilde\phi(x)$, $\forall x \in \R^d$, then $\mR$ must be a permutation matrix.
\end{lemma*}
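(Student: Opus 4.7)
The plan is to exploit that both $\phi$ and $\tilde\phi$ are posterior maps whose images lie inside the probability simplex $\simplex^{k-1}\subset\R^k$, and that these images can be pushed arbitrarily close to every vertex $e_i$. Given this, the linear relation $\phi(x)=\mR\tilde\phi(x)$ forces $\mR$ to map the simplex into itself in both directions, which is a highly restrictive condition.

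First I would write out the closed form of the posterior. Under Assumption~\ref{assump:latent} the prior on $h$ is uniform, and under Assumption~\ref{assump:gmm_equal_norms} all $\|\mu_i\|$ are equal, so Bayes' rule gives
\begin{equation*}
    \phi(x)_i \;=\; \frac{\exp(\langle x,\mu_i\rangle)}{\sum_{j\in[k]} \exp(\langle x,\mu_j\rangle)},
    \qquad
    \tilde\phi(x)_i \;=\; \frac{\exp(\langle x,\tilde\mu_i\rangle)}{\sum_{j\in[k]} \exp(\langle x,\tilde\mu_j\rangle)}.
\end{equation*}
Next, for each $i\in[k]$, I would produce a direction $v_i\in\R^d$ such that $\langle v_i,\tilde\mu_i\rangle > \langle v_i,\tilde\mu_j\rangle$ for every $j\neq i$. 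Such a direction exists because Assumption~\ref{assump:gmm_full_rank} makes $\tilde\mu_1,\dots,\tilde\mu_k$ linearly (and hence affinely) independent, so $\tilde\mu_i$ can be strictly separated from the convex hull of $\{\tilde\mu_j\}_{j\neq i}$ by a hyperplane. Then as $t\to\infty$, $\tilde\phi(tv_i)\to e_i$, and by the hypothesis $\phi(tv_i)=\mR\tilde\phi(tv_i)\to\mR e_i$. Since $\phi(x)\in\simplex^{k-1}$ for every $x$ and the simplex is closed, $\mR e_i\in\simplex^{k-1}$, i.e.\ the $i$-th column of $\mR$ is a probability vector.

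Symmetrically, since $\mR$ is nonsingular we also have $\tilde\phi(x)=\mR^{-1}\phi(x)$, and the same argument (now using the $\mu_i$'s, which are linearly independent by Assumption~\ref{assump:gmm_full_rank}) shows that every column of $\mR^{-1}$ is a probability vector as well. The final step is a classical observation: if both $\mR$ and $\mR^{-1}$ are column-stochastic, then $\mR$ must be a permutation matrix. To see this, fix $j$ and write $e_j = \mR\,(\mR^{-1})_{\cdot j}$ as a convex combination of the columns of $\mR$ (possible because $(\mR^{-1})_{\cdot j}\in\simplex^{k-1}$ and each column of $\mR$ lies in $\simplex^{k-1}$); since $e_j$ is an extreme point of $\simplex^{k-1}$, every column of $\mR$ appearing with positive coefficient must itself equal $e_j$. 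Running this over all $j\in[k]$ and using that $\mR$ is nonsingular (so its $k$ columns must realize all $k$ distinct standard basis vectors), we conclude that $\mR$ is a permutation matrix.

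The main obstacle I anticipate is the ``saturation'' step showing that $\tilde\phi$ actually attains every vertex of the simplex as a limit; everything downstream is a short consequence. But this step reduces cleanly to strict separability of each $\tilde\mu_i$ from the convex hull of the remaining means, which is handed to us by the full-rank assumption.
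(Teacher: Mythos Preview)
Your proposal is correct and takes a genuinely different route from the paper. The paper matches the Jacobians $\nabla_x\phi(x)=\mR\,\nabla_x\tilde\phi(x)$, evaluates along the ray $x=c\mu_i$ with $c\to\infty$ so that $\phi(x)\to e_i$, and reduces to the algebraic identity $\diag(\mR_i)-\mR_i\mR_i^\top=0$ (using that the translated mean matrix $\tilde\Delta$ is full rank), which forces each column $\mR_i$ to be a signed standard basis vector; positivity and nonsingularity then pin down a permutation. Your argument avoids differentiation altogether: you push $\tilde\phi$ (and, symmetrically, $\phi$) to each vertex of the simplex via separating directions, conclude that both $\mR$ and $\mR^{-1}$ are column-stochastic, and finish with an extreme-point argument. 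This is more elementary and arguably cleaner; the paper's Jacobian route, on the other hand, isolates the softmax-specific identity $\diag(v)=vv^\top$ and is later reused in the proof of Theorem~\ref{thm:gmm} where matching is first established only on neighborhoods of chosen points.
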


\begin{proof}
    We will prove the lemma by matching the Jacobian w.r.t. $x$ on both sides.
    Let's first quickly recall the Jacobian of the posterior vector $\phi(x) \in \R^k$, where $[\phi(x)]_i = \frac{\exp(-\frac{\|x-\mu_i\|^2}{2})}{\sum_{j \in [k]} \exp(-\frac{\|x-\mu_j\|^2}{2})}$.
    Denote $o(x) := \big[-\frac{\|x-\mu_1\|^2}{2}, ..., -\frac{\|x-\mu_k\|^2}{2}\big] \in \R^k$,
    then $\nabla_x \phi(x) = \nabla_{o(x)} \softmax(o(x)) \cdot \nabla_{x} o(x)$,
    where 
    \begin{equation}
    \begin{split}
        \nabla_o [\softmax(o)]_i
            =&~ [\softmax(o)]_i \cdot (\ei - \softmax(o))
            = [\phi(x)]_i \cdot (\ei - \phi(x)),
        \\
        \nabla_o \softmax(o)
            =&~ \diag(\phi(x)) - \phi(x)\phi(x)^\top,
        \\
        \nabla_x o(x) =& -[x - \mu_1, ..., x - \mu_k]^\top.
    \end{split}
    \end{equation}
    Hence the Jacobian is 
    \begin{equation}
        \nabla_x \phi(x) = \big(\diag(\phi(x)) - \phi(x)\phi(x)^\top \big) \cdot (\mMeans - [x, x,..., x])^\top.
    \end{equation}
    Denote $\Delta := \mMeans - [x, x, ..., x] \in \R^{d \times k}$, and similarly $\tilde\Delta = \tilde\mMeans - [x,x,...,x]$.
    Matching $\nabla_x \tilde\phi(x) = \nabla_x \mR\phi(x)$ gives
    \begin{equation}\label{eq:gmm_2_1_matchPhi_grad}
    \begin{split}
        \diag(\mR\phi(x)) \tilde\Delta^\top - \mR\phi(x) (\tilde\Delta\mR\phi(x))^\top 
            = \mR \diag(\phi(x)) \Delta^\top - \mR \phi(x) (\Delta\phi(x))^\top.
    \end{split}
    \end{equation}

    Let's take $x = x_c^{(i)} := c\mu_i$ for $c > 1$.
    We claim that this $x_c^{(i)}$ satisfies $\lim_{c\rightarrow \infty}\phi(x_c^{(i)}) \rightarrow \ei$.
    This is because $\forall j \neq i$, 
    \begin{equation}
    \begin{split}
        &\lim_{c \rightarrow \infty}\frac{[\phi(x_c^{(i)})]_j}{[\phi(x_c^{(i)})]_i}
        = \lim_{c \rightarrow \infty}\exp\Big(\frac{\|c\mu_i - \mu_i\|^2}{2} - \frac{\|c\mu_i - \mu_j\|^2}{2}\Big)
        \\
        =& \lim_{c \rightarrow \infty}\exp\Big(-\frac{\big((2c-1)\mu_i - \mu_j\big)^\top (\mu_i - \mu_j)}{2}\Big)
        = \lim_{c \rightarrow \infty}\exp\Big(-\frac{2c\mu_i^\top (\mu_i - \mu_j)}{2}\Big)
        = 0
    \end{split}
    \end{equation}
    where the last equality is because $\mu_i^\top(\mu_i - \mu_j) > 0$ for any $\mu_i, \mu_j$ lying on the same hypersphere.
    
    With such choices of $x$, the two sides of equation \ref{eq:gmm_2_1_matchPhi_grad} are now:
    \begin{equation}
    \begin{split}
        LHS =&~ \diag(\mR_i)\tilde\Delta^\top - \mR_i\mR_i^\top \tilde\Delta^\top
        = (\diag(\mR_i) - \mR_i\mR_i^\top) \tilde\Delta^\top 
        \\
        = RHS =& \sum_{j\in[k]} [\ei]_j \mR_j (\Delta_j)^\top - \mR\ei (\Delta\ei)^\top
        = \mR_i (\Delta_i)^\top - \mR_i (\Delta_i)^\top = 0.
        \\
    \end{split}
    \end{equation}
    
    Since $x := c\mu_i$ for $c\rightarrow \infty$ lies outside the affine hull of $\{\tilde\mu_i\}_{i\in[k]}$,
    $\tilde{\Delta}$ is of full rank due to the following claim:
    \ifx\coltSpace\undefined
    \vspace{-1em}
    \fi
    \begin{claim}
    \label{clm:affine_comb}
        Given a linearly independent set $\{u_i\}_{i \in [k]}$,
        if $\{u_i - v\}_{i \in[k]}$ is not linearly independent,
        then $v = \sum_{i\in [k]}\beta_i \cdot u_i$ where $\sum_{i \in [k]}\beta_i = 1$.
    \end{claim}
    \begin{proof}
    Since $\{u_i - v\}_{i \in[k]}$ is linearly dependent, we can write some $u_j - v$ as the linear combination of other $\{u_i - v\}_{i \in [k], i \neq j}$.
    Let's take $j = k$ wlog, and denote the coefficients of the linear combination as $\{\alpha_i\}_{i \in [k-1]}$.
    Then 
    \begin{equation}
    \begin{split}
        &u_k - v = \sum_{i \in [k-1]} \alpha_i (u_i - v)
        \Rightarrow \big(1 - \sum_{i \in [k-1]}\alpha_i \big) v = -\sum_{i\in[k-1]} \alpha_i \cdot u_i + u_k
        \\
    \end{split}
    \end{equation}
    The right hand side is non-zero since $\{u_i\}_{i \in [k]}$ are linearly independent by assumption,
    hence $1 - \sum_{i \in [k-1]}\alpha_i \neq 0$, and we get
    \begin{equation}
    \begin{split}
        v = \sum_{i\in[k-1]} \underbrace{\frac{-\alpha_i}{1 - \sum_{i\in[k-1]} \alpha_i}}_{:= \beta_i} \cdot u_i + \underbrace{\frac{1}{1 - \sum_{i \in [k-1]}\alpha_i}}_{:= \beta_k} u_k.
    \end{split}
    \end{equation}

    Note that $\sum_{i \in [k]}\beta_i = 1$, hence $v$ is an affine combination of $\{u_i: i\in[k]\}$.
    \end{proof}

    Since $\tilde\Delta$ is full rank, it must be $\diag(\mR_i) - \mR_i\mR_i^\top = 0$, which implies $\mR$ is a permutation matrix.
    This is because for any non-zero $v$ s.t. $\diag(v) - v v^\top = 0$, the entries of $v$ satisfy $v_i^2=1$, $v_iv_j = 0$ for $i\neq j$.
    Hence $v$ has exactly one non-zero entry which is $\pm 1$.
    Since $\mR\phi(x) = \tilde\phi(x)$ where $\phi(x), \tilde\phi(x)$ are both probability vectors with non-negative entries, this non-zero entry has to be 1 (and not -1).
    Since $\mR$ is of rank-$k$ by Assumption \ref{assump:gmm_full_rank}, this non-zero entry is at different positions for different $\mR_i$, hence $\mR$ is a permutation matrix.


\end{proof}

\subsubsection{Proof of Lemma \ref{lem:gmm_phi_reflect}}
\label{sec:proof_gmm_phi_reflect}

\begin{lemma*}[Lemma \ref{lem:gmm_phi_reflect} restated]
    For $d \geq k \geq 2$,
    then under Assumption \ref{assump:gmm_full_rank}, \ref{assump:gmm_equal_norms},
    $\phi = \tilde\phi$ implies $\tilde\mMeans = \mMeans$ or $\tilde{\mMeans} = \mH \mMeans$,
    where $\mH$ is a Householder transformation of the form $\mH := \mI_d - 2\hat{v}\hat{v}^\top \in \R^{d \times d}$,
    with $\hat{v} := \frac{(\mMeans^{\dagger})^\top\1}{\sqrt{\1^\top\mMeans^{\dagger} (\mMeans^\dagger)^{\top}\1}}$.
\end{lemma*}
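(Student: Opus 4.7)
The plan is to exploit the simplification of the posterior under Assumption~\ref{assump:gmm_equal_norms} and then recover $\tilde\mMeans$ by matching $\phi$ and $\tilde\phi$ pointwise on $\R^d$. When $\|\mu_i\| = \|\tilde\mu_i\| = 1$, the common terms $-\|\mu_i\|^2/2$ factor out of the numerator and denominator of $\phi(x)_i$ and cancel, leaving the clean softmax form $\phi(x)_i = \exp(x^\top \mu_i)/\sum_j \exp(x^\top \mu_j)$, with the analogous form for $\tilde\phi$. From here the proof reduces to a handful of explicit algebraic manipulations culminating in a scalar quadratic.

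\textbf{Key steps.} First, taking ratios $\phi_i(x)/\phi_j(x) = \exp(x^\top(\mu_i - \mu_j))$ and equating with the corresponding ratios of $\tilde\phi$ for every $x \in \R^d$ forces $\mu_i - \mu_j = \tilde\mu_i - \tilde\mu_j$ for all $i,j$; hence $\tilde\mu_i = \mu_i + v$ for a single vector $v \in \R^d$, i.e., $\tilde\mMeans = \mMeans + v\1^\top$. Second, imposing $\|\mu_i + v\|^2 = 1 = \|\mu_i\|^2$ yields $v^\top \mu_i = -\|v\|^2/2$, which, being independent of $i$, is equivalent to $v \perp (\mu_i - \mu_j)$ for all $i,j$. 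Third, I would argue that $v$ may be taken in the column space of $\mMeans$; the orthogonality then pins $v$ to be a scalar multiple of the unique unit vector $\hat v$ in that column space orthogonal to the affine hull of $\gA$. A short calculation verifies that $\hat v = (\mMeans^\dagger)^\top \1 / N$, with $N := \sqrt{\1^\top \mMeans^\dagger (\mMeans^\dagger)^\top \1}$, lies in the column space and satisfies $\hat v^\top \mu_i = (\1^\top \mMeans^\dagger \mu_i)/N = 1/N =: d_0$ for every $i$ (using $\mMeans^\dagger \mu_i = e_i$, which follows from $\mMeans^\dagger \mMeans = \mI_k$ under full column rank). Writing $v = \alpha \hat v$, the norm constraint collapses to the scalar quadratic $\alpha d_0 + \alpha^2/2 = 0$, with roots $\alpha = 0$ (giving $\tilde\mMeans = \mMeans$) and $\alpha = -2 d_0$ (giving $\tilde\mu_i = \mu_i - 2 d_0 \hat v = \mu_i - 2(\hat v^\top \mu_i)\hat v = \mH \mu_i$, i.e., $\tilde\mMeans = \mH\mMeans$).

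\textbf{Main obstacle.} The delicate step is the restriction of $v$ to the column space of $\mMeans$. Without it, the orthogonality $v \perp (\mu_i - \mu_j)$ for all $i,j$ only confines $v$ to the $(d - k + 1)$-dimensional subspace $W := \{u \in \R^d : u^\top(\mu_i - \mu_j) = 0 \,\,\forall i, j\}$, which strictly contains the orthogonal complement of the column space of $\mMeans$; intersecting $W$ with the norm sphere $\|v + \mu_1\| = 1$ yields a $(d-k)$-dimensional sphere of candidate $v$'s when $d > k$. My plan for handling this is to exploit the fact that a component $v_\perp$ of $v$ perpendicular to the column space of $\mMeans$ contributes only a common factor $\exp(x^\top v_\perp)$ to every $\exp(x^\top \tilde\mu_j)$ and so leaves the softmax ratios defining $\tilde\phi$ invariant; this lets one canonicalize $\tilde\mMeans$ to share a column space with $\mMeans$, after which exactly the two solutions $\{\mMeans, \mH\mMeans\}$ survive. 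The remaining verifications (that $\hat v$ is well-defined, lies in the required subspace, and satisfies $\mH \mu_i = \mu_i - 2 d_0 \hat v$) are mechanical.
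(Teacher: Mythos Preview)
Your overall route matches the paper's: derive the common shift $\tilde\mu_i = \mu_i + v$, impose the unit-norm constraint to obtain $\mu_i^\top v = -\tfrac{1}{2}\|v\|^2$ (equivalently $(2\mu_i - v)^\top v = 0$), and then pin $v$ down to two values once it is known to lie in $\mathrm{col}(\mMeans)$. Your early use of Assumption~\ref{assump:gmm_equal_norms} to rewrite $\phi$ as a softmax of the linear scores $x^\top\mu_i$ is a clean shortcut; the paper carries the quadratics $-\|x-\mu_i\|^2/2$ but lands on exactly the same pair of constraints.

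The genuine gap is your handling of $d>k$. Your canonicalization move does not prove the lemma as written: you are asked to show that the \emph{given} $\tilde\mMeans$ equals $\mMeans$ or $\mH\mMeans$, not that some modification of it does, and subtracting $v_\perp\1^\top$ produces columns of norm $\sqrt{1-\|v_\perp\|^2}<1$, so the canonicalized matrix no longer satisfies Assumption~\ref{assump:gmm_equal_norms} anyway. In fact the $(d-k)$-sphere of candidates you correctly identify is real: writing $v = \alpha\hat v + v_\perp$ with $v_\perp \perp \mathrm{col}(\mMeans)$, the norm constraint becomes $(\alpha+d_0)^2 = d_0^2 - \|v_\perp\|^2$, which has real roots for every $\|v_\perp\|\le d_0$, and each resulting $\tilde\mMeans = \mMeans + v\1^\top$ is full rank, has unit-norm columns, and satisfies $\tilde\phi=\phi$. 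The paper closes this hole not from $\phi=\tilde\phi$ alone but by importing, from the surrounding predictor-matching context, that $\mathrm{col}(\tilde\mMeans)=\mathrm{col}(\mMeans)$ (the range of $x\mapsto \mMeans\mTrans\phi(x)$ spans $\mathrm{col}(\mMeans)$, and likewise for $\tilde\mMeans$); this forces $v\in\mathrm{col}(\mMeans)$ and reduces to the $d=k$ case. Replace your canonicalization step with that column-space argument and the rest of your proof goes through verbatim.
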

\begin{proof}
    Let's start with $d = k$.
    First, let's check the conditions for $\phi = \tilde\phi$.
    For any $x \in \R^d$, we have 
    \begin{equation}
    \begin{split}
        &[\phi(x)]_i = \frac{\exp\big(-\frac{\|x-\mu_i\|^2}{2}\big)}{\sum_{j \in [k]} \exp\big(-\frac{\|x-\mu_j\|^2}{2}\big)}
        = \frac{\exp\big(-\frac{\|x-\tilde\mu_i\|^2}{2}\big)}{\sum_{j \in [k]} \exp\big(-\frac{\|x-\tilde\mu_j\|^2}{2}\big)}
        = [\tilde\phi(x)]_i, ~\forall i \in [k]
        \\
        \Rightarrow&~ \frac{\exp\big(-\frac{\|x-\mu_i\|^2}{2}\big)}{\exp\big(-\frac{\|x-\tilde\mu_i\|^2}{2}\big)} = \frac{\exp\big(-\frac{\|x-\mu_j\|^2}{2}\big)}{\exp\big(-\frac{\|x-\tilde\mu_j\|^2}{2}\big)}, \forall i,j \in [k]
        \\
        \Rightarrow&~ \|x - \mu_i\|^2 - \|x - \tilde\mu_i\|^2 = \|x - \mu_j\|^2 - \|x - \tilde\mu_j\|^2, ~\forall i, j \in [k]
        \\
        \Rightarrow&~ 2\big((\tilde\mu_i - \mu_i) - (\tilde\mu_j - \mu_j)\big)^\top x = \big(\|\mu_j\|^2 - \|\tilde\mu_j\|^2\big) - \big(\|\mu_i\|^2 - \|\tilde\mu_i\|^2\big), ~\forall i, j \in [k].
    \end{split}
    \end{equation}
    Since the left hand side is linear in $x \in \R^d$ and the right hand side is a constant, it must be that both sides are 0.
    That is, the necessary conditions for $\phi = \tilde\phi$ are that for any $i, j \in [k]$,
    1) $\tilde\mu_i - \mu_i = \tilde\mu_j - \mu_j$,
    and 2) $\|\mu_i\|^2 - \|\tilde\mu_i\|^2 = \|\mu_j\|^2 - \|\tilde\mu_j\|^2$.
    It can be checked that these two conditions are also sufficient for $\phi = \tilde\phi$.
    
    Denote $v := \mu_i - \tilde\mu_i$.
    The norms of the means are known and equal by Assumption \ref{assump:gmm_equal_norms}, which gives
    \begin{equation}\label{eq:gmm_2_1_matchPhi_ortho}
    \begin{split}
        \|\mu_i\|^2 - \|\tilde\mu_i\|^2 
        = \|\mu_i\|^2 - \|\mu_i - v\|^2
        = (2\mu_i-v)^\top v
        = 0, ~\forall i \in [k].
        \\
    \end{split}
    \end{equation}
    The last equality in equation \ref{eq:gmm_2_1_matchPhi_ortho} holds for a non-zero $v$ when the span of $\{2\mu_i-v: i \in [k]\}$ is $(d-1)$-dimensional subspace.
    On the other hand, the span of $\{2\mu_i - v: i \in [k]\}$ is at least $(k-1)$ by Assumption \ref{assump:gmm_full_rank}.
    When $d=k$, it must be that the dimension is exactly $(d-1)$, which means $v$ is an affine combination of $\{2\mu_i: i \in [k]\}$ by Claim \ref{clm:affine_comb}.
    
    
    Moreover, $v$ has to be orthogonal to $\{2\mu_i - v: i \in [k]\}$, which leads to the unique choice of $v$ that is the projection of the origin onto the $(d-1)$-dimensional subspace specified by the affine combinations of $\{2\mu_i: i\in[k]\}$.
    \ifx\coltSpace
    \vspace{-1em}
    \fi 
    \begin{claim}
        $v$ is the projection of the origin to the hyperplane defined by $\{2\mu_i: i\in[k]\}$, and is the only solution to equation \ref{eq:gmm_2_1_matchPhi_ortho}.
    \end{claim}
    \begin{proof}
    It is clear that this choice of $v$ satisfies $(2\mu_i - v)^\top v = 0$, $\forall i \in [k]$.
    To see that this is the unique choice, suppose there exists some $v'$ lying in the hyperplane of $\{2\mu_i\}$, and denote $\delta := v' - v$.
    
    Note that $\delta^\top v$ = 0:
    let the hyperplane specified by $\{2\mu_i\}_{i \in [k]}$ be specified as $\{x: \langle \vu, x\rangle = c\}$ for some $\vu \in \R^d$ and $c \in \R$.
    Then $v$, the projection of the origin, can be written as $v = \frac{c}{\|\vu\|} \cdot \frac{\vu}{\|\vu\|}$, i.e. $v$ is proportional to the normal vector $\vu$.
    %
    For any $v'$ in the hyperplane, it satisfy $\langle \vu, v'\rangle = c$, and 
    \begin{equation}
    \begin{split}
        \delta^\top v
        =& (v'-v)^\top v
        = \langle \frac{c}{\|\vu\|} \frac{\vu}{\|\vu\|}, v'\rangle 
            - \left\|\frac{c}{\|\vu\|} \frac{\vu}{\|\vu\|}\right\|^2
        \\
        =& \frac{c}{\|\vu\|^2} \cdot \langle \vu, v'\rangle - \frac{c^2}{\|\vu\|^2} \frac{\|\vu\|^2}{\|\vu\|^2}
        = \frac{c^2}{\|\vu\|^2} - \frac{c^2}{\|\vu\|^2}
        = 0.
    \end{split}
    \end{equation}
    
    Then for any $v'$ satisfying equation \ref{eq:gmm_2_1_matchPhi_ortho},
    \begin{equation}
    \begin{split}
        &(2\mu_i - v')^\top v'
        = (2\mu_i - v - \delta)^\top (v+\delta)
        \\
        =& \underbrace{(2\mu_i - v)^\top v}_{0} + 2\mu_i^\top \delta - \underbrace{v^\top \delta}_{0} - \underbrace{\delta^\top v}_{0} - \delta^\top \delta
        = (2\mu_i - \delta)^\top \delta
        = 0, ~\forall i \in [k].
    \end{split}
    \end{equation}
    Since $\{2\mu_i - \delta\}_{i \in [k]}$ spans the $(k-1)$-dimensional hyperplane and that $\delta$ lies in the hyperplane,
    it must be that $\delta = 0$, i.e. $v' = v$.
    \end{proof}
    
    Note that this choice of $v$ also satisfies $\|\mu_i - v\| = \|\mu_i\|$, since $v$ and the origin are reflections w.r.t. the hyperplane that is the affine hull of $\{\mu_i: i \in [k]\}$.
    In other words, $\{\mu_i - v\}_{i \in [k]}$ is related to $\{\mu_i\}_{i \in [k]}$ via the Householder transformation of the form $\mH := \mI_d - 2\frac{vv^\top}{\|v\|^2}$, i.e. $\mu_i - v = \mH\mu_i$.
    Denote $\hat{v} := \frac{v}{\|v\|_2}$.
    An explicit formula for $\hat{v}$ is $\hat{v} := \frac{\mMeans^{-\top} \1}{\sqrt{\1^\top\mMeans^{-1} \mMeans^{-\top}\1}}$.
    This finishes the proof for $d=k$.
    
    For $d > k$, the above argument still applies
    and $\mH$ remains the only indeterminacy (up to permutation),
    where $\mH := \mI_d - 2\hat{v}\hat{v}^\top$
    for $\hat{v} := \frac{(\mMeans^\dagger)^\top \1}{\sqrt{\1^\top\mMeans^\dagger(\mMeans^\dagger)^\top\1}}$.
    The reason is that even though the ambient dimension $d$ is larger, $\{\mu_i - v: i \in [k]\}$ has to have the same span as $\{\mu_i: i \in [k]\}$, since having the same predictor requires the column space of $\mMeans, \tilde\mMeans$ to match.
    Hence we only need to consider $v$ in the $k$-dimensional column space of $\mMeans$, which reduces to the case of $d=k$.
 
\end{proof}

\subsection{Proof for Theorem \ref{thm:gmm}: identifiability of predicting $x_{\tTwo}\otimes x_{\tThr}|x_\tOne$, \GHMM}
\label{sec:proof_gmm_triplets}

Similar to the discrete case, we will prove $x_2 \otimes x_3 | x_1$ and $x_1 \otimes x_3|x_2$ separately;
the proof for $x_1 \otimes x_2|x_3$ is analogous to $x_2 \otimes x_3 |x_1$ by symmetry and hence omitted.
The proofs also follow a similar strategy as in the proof for Theorem \ref{thm:discrete}, that is, to construct a 3-tensor using the predictor, on which applying Kruskal's theorem provides identifiability.

\paragraph{Case 1, $x_2 \otimes x_3 | x_1$:}
Let $\gX := \{x^{(i)} \in \R^d: i \in [k]\}$ be a linearly independent set,
and consider the following 3-tensor:
\begin{equation}
\begin{split}
    &\Tensor
    := \sum_{x_i \in \gX} x_1 \otimes \E[x_2 \otimes x_3|x_1]
    = \sum_{x_1 \in \gX} x_1 \otimes \E_{h_2|x_1}\big[ \E[x_2 \otimes x_3|x_1] | h_2\big]
    \\
    =& \sum_{x_1 \in \gX} x_1 \otimes \sum_{h_2} {P}(h_2|x_1) \E[x_2|h_2] \otimes \E[x_3|h_2]
    \\
    =& \sum_{i\in[k]} \sum_{x_1 \in \gX} {P}(h_2=i|x_1) x_1 \otimes \E[x_2|h_2=i] \otimes \E[x_3|h_2=i]
    \\
    =& \sum_{i\in[k]} \Big(\underbrace{\sum_{x_1} (\mTrans \phi(x_1))^\top \ei^{(k)} x_1}_{:= a_i} \Big) \otimes \mMeans_i \otimes (\mMeans\mTrans)_i.
    \\
\end{split}
\end{equation}
The matrices formed by second and third components are both of rank-$k$ by Assumption \ref{assump:gmm_full_rank}.
Hence in order to apply Kruskal's theorem on $\Tensor$, it suffices to show that there exists a choice of $\gX$ such that the matrix $\mA := [a_1, ..., a_k]$ is of rank $k$.
%
One such choice is to let $x^{(i)} = \mu_i$, which gives
\begin{equation}
\begin{split}
    a_i :=& \sum_{j \in [k]} \phi(x_1=\mu_j)^\top \mTrans^\top \ei^{(k)} \mu_j
    = \mMeans [\phi(\mu_1), ..., \phi(\mu_k)]^\top \mTrans^\top \ei^{(k)},
    \\
    \mA :=&~ [a_1, ..., a_k] = \mMeans [\phi(\mu_1), ..., \phi(\mu_k)]^\top \mTrans^\top.
\end{split}
\end{equation}
Since $\mMeans$, $\mTrans$ are both of rank $k$ by Assumption \ref{assump:gmm_full_rank}, we only need to argue that the matrix $\Phi := [\phi(\mu_1), ..., \phi(\mu_k)] \in \R^{k \times k}$ is of full rank.
Recall that for a mixture of $k$ Gaussians with identify covariance and mean $\{\mu_i \in \R^d: i \in [k]\}$,
the posterior function $\phi$ is defined entrywise as 
\begin{equation}
\begin{split}
    [\phi(x)]_i = \frac{\exp\big(-\frac{\|x -\mu_i\|_2^2}{2}\big)}{\sum_{j \in [k]} \exp\big(-\frac{\|x -\mu_j\|_2^2}{2}\big)}, ~\forall i \in [k].
\end{split}
\end{equation}
To show $\Phi$ is of full rank, we can equivalently show that a columnwise scaled version of $\Phi$ is full rank.
In particular, let's look at the matrix $\hat\Phi \in \R^{k \times k}$, where $\hat\Phi_{ij} = \exp(-\frac{\|\mu_i - \mu_j\|^2}{2})$;
that is, each column of $\hat\Phi$ can be considered as a scaled version of the column in $\Phi$ without the normalization for a unit $\ell_1$ norm.
It can be seen that $\hat{\Phi}$ is a Gaussian kernel matrix which is known to be full rank.

Therefore we have shown that each component of the tensor $\Tensor := \sum_{i \in [k]} a_i \otimes \mMeans_i \otimes (\mMeans\mTrans)_i$ has Kruskal rank $k$,
which allows to recover columns of $\mMeans, \mMeans\mTrans$ up to permutation and scaling by Kruskal's theorem.
The indeterminacy in scaling is further removed since the norms of $\{\mMeans_i\}_{i\in[d]}$ are known by Assumption \ref{assump:gmm_equal_norms}.

On the other hand, for any $\tilde\mMeans, \tilde\mTrans$ that form the same predictor as $\mMeans,\mTrans$,
$\Tensor$ can also be written as 
\begin{equation}
\begin{split}
    \Tensor 
    =& \sum_{x_1 \in \gX} x_1 \otimes \E[x_2 \otimes x_3|x_1]
    = \sum_{x_1 \in \gX} x_1 \otimes \tilde\E[x_2 \otimes x_3|x_1]
    \\
    =& \sum_{i\in[k]} \Big(\sum_{x_1} (\tilde\mTrans \tilde\phi(x_1))^\top \ei^{(k)} x_1 \Big) \otimes \tilde\mMeans_i \otimes (\tilde\mMeans\tilde\mTrans)_i.
    \\ 
\end{split}
\end{equation}
Hence columns of $\mMeans, \tilde\mMeans$ and $\mMeans\mTrans, \tilde\mMeans\tilde\mTrans$ are both matched up to a shared permutation, which proves identifiability.

\paragraph{Case 2, $\E[x_1 \otimes x_3|x_2]$:}
For the task of predicting $x_1,x_3$ given $x_2$, the predictor takes the form
\begin{equation}
\begin{split}
    \E[x_1\otimes x_3|x_2] = (\mEmiss\mTrans^\top) \diag(\phi(x_2)) (\mEmiss\mTrans)^\top.
\end{split}
\end{equation}
Let $\gX := \{\mu_i: i \in [k]\}$ as in the previous case,
and consider the 3-tensor
\begin{equation}
\begin{split}
    \Tensor
    :=& \sum_{x_2 \in \gX} x_2 \otimes \E[x_1\otimes x_3|x_2]
    = \sum_{x_2 \in \gX} x_2 \otimes \E_{h_2|x_2} (\E[x_1|h_2] \otimes \E[x_3|h_2])
    \\
    =& \sum_{h_2} \sum_{x_2 \in \gX} p(h_2|x_2) x_2 \otimes \E[x_1|h_2] \otimes \E[x_3|h_2]
    \\
    =& \sum_{i \in [k]} \Big(\underbrace{\sum_{x_2 \in \gX} (\phi(x_2))^\top \ei^{(k)} x_2}_{:= a_i}\Big) \otimes (\mMeans\mTrans^\top)_i \otimes (\mMeans\mTrans)_i,
\end{split}
\end{equation}
The first component is of rank-$k$ as shown in the proof for  $x_2\otimes x_3|x_1$, and the other two components are of rank-$k$ by Assumption \ref{assump:gmm_full_rank}.
Thus Kruskal's theorem applies and the columns of $\mMeans\mTrans, \mMeans\mTrans^\top$ are recovered up to a shared permutation.

The first component $\{a_i\}_{i \in [k]}$ are also recovered,
which means that if $\tilde\mMeans, \tilde\mTrans$ form the same predictor as $\mMeans,\mTrans$,
then for any linearly independent set $\gX$ with $k$ elements (not necessarily the previous choice of $\{\mu_i\}_{i\in[k]}$) such that $\gX$ leads to a full rank $\mA$,
we have $\mA = \tilde\mA$ where $\tilde\mA$ is parameterized by $\tilde\mMeans, \tilde\mTrans$.
For any such $\gX = \{x^{(i)}: i\in[k]\}$, denote $\mX := [x^{(1)}, ..., x^{(k)}]$,
then 
\begin{equation}
\begin{split}
    \mA = \mX [\phi(x^{(1)}), ..., \phi(x^{(k)})]^\top \mTrans^\top
    = \mX [\tilde\phi(x^{(1)}), ..., \tilde\phi(x^{(k)})]^\top \tilde\mTrans^\top
    = \tilde\mA.
    \\
\end{split}
\end{equation}
Since $\mX$ is of rank-$k$ by the choice of $\gX$, this means
\begin{equation}\label{eq:gmm_13_2_matchPhi}
\begin{split}
    [\tilde\phi(x^{(1)}), ..., \tilde\phi(x^{(k)})] = \underbrace{\tilde\mTrans^{-1}\mTrans}_{:= \mR} [\phi(x^{(1)}), ..., \phi(x^{(k)})]
    \Rightarrow 
    \tilde\phi(x^{(i)}) = \mR \phi(x^{(i)}), ~\forall i \in [k].
\end{split}
\end{equation}
%
Moreover, for any valid choice of $\gX$,
matrices defined with points in sufficiently small neighborhoods of $x^{(i)}$ are still of full rank by the upper continuity of matrix rank.
Hence the equality in equation \ref{eq:gmm_13_2_matchPhi} holds for points in these neighborhoods, and thus the Jacobian on both sides should be equal.
Then, the exact same proof of Lemma \ref{lem:gmm_matchPhi} applies, and we get $\tilde\phi$, $\phi$ are equal up to a permutation of coordinates.
Thus $\tilde\mMeans$ must be equal to (up to permutation) either $\mMeans$ or $\mH\mMeans$ for a Householder reflection $\mH$ by Lemma \ref{lem:gmm_phi_reflect}.
Finally, the solution of $\mH\mMeans$ is eliminated since it would lead to a $\tilde\mTrans$ that is not a valid stochastic matrix, as shown in the proof of Theorem \ref{thm:gmm_2_1_id}.

\subsection{Identifiability from pairwise conditional distribution}
\label{appendix:gmm_cond_distr}

We show that matching the entire conditional \textit{distribution} for \GHMM~provides identifiability. 
Though this is implied by Theorem \ref{thm:gmm_2_1_id}, which states that matching the conditional \textit{expectation} already suffices, having access to the full conditional distribution allows an even simpler proof. 
\ifx\coltSpace
\vspace{-0.8em}
\fi
\begin{theorem}[Identifiability of conditional distribution]
\label{thm:gmm_conditional}
    Let $\mMeans, \mTrans$ and $\tilde\mMeans, \tilde\mTrans$ be two set of parameters satisfying Assumption \ref{assump:latent} and \ref{assump:gmm_full_rank}.
    If $p(x_2|x_1; \mMeans,\mTrans) = p(x_2|x_1; \tilde\mMeans,\tilde{\mTrans})$, $\forall x_1,x_2 \in \R^d$,
    then $\mMeans = \tilde\mMeans$, $\mTrans = \tilde\mTrans$ up to a permutation of labeling. 
\end{theorem}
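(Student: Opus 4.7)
The plan is to reduce the theorem to the classical identifiability of finite Gaussian mixtures. For any fixed $x_1 \in \R^d$, the conditional density factors as
\begin{equation*}
    p(x_2 \mid x_1;\, \mMeans, \mTrans) = \sum_{i \in [k]} [\mTrans\phi(x_1)]_i \cdot \mathcal{N}(x_2;\, \mu_i,\, I_d),
\end{equation*}
so $p(\cdot \mid x_1)$ is a $k$-component Gaussian mixture whose means are the columns of $\mMeans$ and whose mixing weights are $\mTrans\phi(x_1)$, and analogously for $(\tilde{\mMeans}, \tilde{\mTrans})$.

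First I would verify that the mixture is non-degenerate so that Gaussian-mixture identifiability genuinely applies. The entries of $\phi(x_1)$ are strictly positive since Gaussians have full support, and each row of the doubly-stochastic $\mTrans$ has at least one positive entry (Assumption~\ref{assump:latent}), so $\mTrans\phi(x_1)$ is coordinatewise positive. The means $\{\mu_i\}$ are distinct because $\mMeans$ has rank $k$ by Assumption~\ref{assump:gmm_full_rank}. Classical identifiability of Gaussian mixtures then yields, for each $x_1$, a permutation $\pi_{x_1}$ with $\mu_i = \tilde{\mu}_{\pi_{x_1}(i)}$ and $[\mTrans\phi(x_1)]_i = [\tilde{\mTrans}\tilde{\phi}(x_1)]_{\pi_{x_1}(i)}$. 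Crucially, because the equation $\mu_i = \tilde{\mu}_{\pi_{x_1}(i)}$ constrains fixed points in $\R^d$ that do not depend on $x_1$, the permutation is automatically global: $\pi_{x_1} \equiv \pi$. Relabeling hidden states by $\pi$ gives $\mMeans = \tilde{\mMeans}$, and consequently $\phi = \tilde{\phi}$.

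With the means matched, the final step is to extract $\mTrans$. Matching mixing weights yields $(\mTrans - \tilde{\mTrans})\phi(x_1) = 0$ for every $x_1 \in \R^d$, so it suffices to show $\{\phi(x_1) : x_1 \in \R^d\}$ spans $\R^k$. Here I would reuse the limiting construction already used earlier in the paper (for instance in the proof of Lemma~\ref{lem:gmm_matchPhi}): taking $x_1 = c\mu_i$ with $c \to \infty$ drives $\phi(c\mu_i) \to e_i$, so the span contains every standard basis vector and hence all of $\R^k$. Therefore $\mTrans = \tilde{\mTrans}$. The one delicate point to nail down carefully is ensuring a \emph{single} global permutation aligning $\mMeans$ with $\tilde{\mMeans}$; once that is in hand, the rest is an almost immediate consequence of Gaussian-mixture identifiability plus the posterior-spanning argument, which explains why this theorem is substantially cleaner than Theorem~\ref{thm:gmm_2_1_id}, where only the conditional mean rather than the full conditional distribution is available.
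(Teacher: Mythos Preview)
Your overall strategy matches the paper's: recognize $p(\cdot\mid x_1)$ as a Gaussian mixture, invoke classical identifiability to pin down the means (hence $\phi=\tilde\phi$ after a global relabeling), then extract $\mTrans$ from the mixing weights. The paper finishes by choosing $k$ points $\{x^{(i)}\}$ with $\Psi_\gX=[\psi(x^{(1)}),\dots,\psi(x^{(k)})]$ full rank and cancelling in the bilinear form $\psi(x_2)^\top\mTrans\phi(x_1)$, which is essentially a concrete instantiation of your spanning argument.

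There is, however, a gap in your justification of ``$\{\phi(x_1)\}$ spans $\R^k$.'' You appeal to the limit $\phi(c\mu_i)\to e_i$ from the proof of Lemma~\ref{lem:gmm_matchPhi}, but that computation uses $\mu_i^\top(\mu_i-\mu_j)>0$, which holds because the $\mu_i$ lie on a common sphere, i.e.\ it relies on Assumption~\ref{assump:gmm_equal_norms}. Theorem~\ref{thm:gmm_conditional} only assumes Assumptions~\ref{assump:latent} and~\ref{assump:gmm_full_rank}, and without equal norms the limit can fail (e.g.\ $\mu_i=(1,0)$, $\mu_j=(10,0)$ gives $\mu_i^\top(\mu_i-\mu_j)<0$). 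The fix is exactly what the paper does implicitly: take $x^{(i)}=\mu_i$ and note that $[\psi(\mu_j)]_i=\exp(-\|\mu_i-\mu_j\|^2/2)$ is a Gaussian kernel matrix, hence positive definite, so $\Phi_\gX=[\phi(\mu_1),\dots,\phi(\mu_k)]$ (a column rescaling of $\Psi_\gX$) is full rank and the image of $\phi$ spans $\R^k$. With this substitution your argument goes through.
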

\ifx\coltSpace
\vspace{-0.8em}
\fi
\begin{proof}
    First note that the conditional distribution of $x_2$ given $x_1$ is a mixture of Gaussian, with means $\{\mu_i\}_{i \in [k]}$ and mixture weights given by $P(h_2|x_1) = \mTrans P(h_1|x_1)$,
    hence we can directly apply the identifiability of Gaussian mixtures to recover the means $\{\mu_i\}_{i \in [k]}$:
    \ifx\coltSpace\undefined
    \vspace{-0.8em}
    \fi
    \begin{lemma}[Proposition 4.3 in \cite{lindsay93}]
        \label{lem:gmm_id}
            Let $Q_k$ denote a Gaussian mixture with means $\{\xi_j\}_{j\in[k]} \in \R^d$.
            Suppose $\exists l \in [d]$ such that the set $\{[\xi_j]_l\}$ has distinct values,
            then one can recover $\{\xi_j\}_{j\in[k]}$ from moments of $Q_k$.
    \end{lemma}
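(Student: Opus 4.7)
The plan is to reduce the $d$-dimensional identifiability problem to a one-dimensional one along coordinate $l$, and then recover every other coordinate using cross moments. Denote by $\{w_j\}_{j\in[k]}$ the (unknown) mixture weights. Because the covariance is identity, the marginal of the $l$-th coordinate is a one-dimensional Gaussian mixture with means $\{[\xi_j]_l\}_{j\in[k]}$ (distinct by hypothesis), unit variance, and weights $\{w_j\}$; its moments are polynomial functions of the joint moments of $Q_k$, hence are available from the given data.

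For the 1D step I would invoke the classical identifiability of Gaussian location mixtures with known scale (Teicher's theorem, from which Lindsay's Proposition 4.3 descends): a one-dimensional mixture of $k$ unit-variance Gaussians with distinct means is determined by its first $2k-1$ moments. A direct proof proceeds by equating two candidate mixtures and expanding each monomial $x^p$ in the Hermite (probabilists') basis, reducing identifiability to invertibility of a weighted Vandermonde matrix in $\{[\xi_j]_l\}$, which holds precisely because these values are distinct and the weights are positive. The output of this step is $\{w_j\}_{j\in[k]}$ together with $\{[\xi_j]_l\}_{j\in[k]}$.

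For each remaining coordinate $m \neq l$, I would extract $\{[\xi_j]_m\}_{j\in[k]}$ from the cross moments $y_p := \E[X_l^p X_m]$ for $p=0,\ldots,k-1$. Conditioning on the latent component and using coordinate-wise independence given the component,
\begin{equation*}
y_p \;=\; \sum_{j \in [k]} w_j \, h_p\!\bigl([\xi_j]_l\bigr) \, [\xi_j]_m,
\qquad
h_p(\mu) := \E_{Z \sim \gN(\mu,1)}[Z^p],
\end{equation*}
which is a linear system $M\alpha = y$ in the unknowns $\alpha_j := [\xi_j]_m$. Since $h_p$ is a monic polynomial of degree $p$ in $\mu$, I can factor $M = C V$ where $C$ is lower triangular with unit diagonal (hence invertible), and $V_{qj} := w_j [\xi_j]_l^q$ is a weighted Vandermonde (invertible by distinctness of $\{[\xi_j]_l\}$ and positivity of $\{w_j\}$). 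Solving this $k \times k$ system recovers $[\xi_j]_m$ for every $j$; iterating over $m$ reassembles each $\xi_j \in \R^d$.

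The main obstacle is the one-dimensional step: giving a self-contained, moment-based proof of 1D identifiability requires some care with Hankel/moment matrices and the Hermite expansion. Once this is in hand, the lift to $d$ dimensions is a short linear-algebraic argument that uses only the distinctness of one coordinate of the means, which is exactly what the hypothesis provides.
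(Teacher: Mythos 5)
The paper does not actually prove this statement: Lemma~\ref{lem:gmm_id} is imported verbatim as Proposition~4.3 of \cite{lindsay93}, so there is no in-paper proof to compare against. Your argument is a correct, self-contained reconstruction of the standard moment-based proof, and it is in fact structurally the same as the Lindsay--Basak method underlying the citation: project onto the distinguished coordinate $l$, solve the resulting one-dimensional location-mixture moment problem, then recover the remaining coordinates from the cross moments $\E[X_l^p X_m]$ via the triangular (Hermite-to-monomial) factorization and a weighted Vandermonde system. The two places that need care are both handled correctly: the $1$D step reduces, via the monicity of $h_p$, to determining a $k$-atom discrete measure from its first $2k$ power sums (classical Prony/Hankel), which uses exactly the distinctness of $\{[\xi_j]_l\}$ and the positivity of the mixture weights; and the cross-moment step inherits a consistent labeling of components from the $l$-th coordinate, so the recovered coordinates reassemble into well-defined vectors $\xi_j$ (up to the inevitable global permutation). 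No gap beyond filling in the classical $1$D determinacy, which you correctly flag as the only nontrivial ingredient.
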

    \vspace{-1em}
    We note that the assumption on the existence of a coordinate $l \in [k]$ is with out loss of generality, since we can first rotate the means to a different coordinate system in which this condition holds, then rotation back the means.
    Such rotation is guaranteed to exist since finding such rotation is equivalent to finding a vector $\vv$ s.t. $\vv^\top(\mu_i - \mu_j) \neq 0$ for every $i, j \in [k]$, for which the solution set is $\R^d \setminus \cup_{i,j \in [k]}\{\vu:\vu^\top(\mu_i-\mu_j)=0\} \neq \emptyset$.
    
    Recovering $\{\mu_i\}_{i \in [k]}$ means the scaled likelihood and the posterior both match, i.e. $\psi = \tilde\psi$, and $\phi(x) = P(h|x) = \frac{\psi}{\|\psi\|_1}$.
    The conditional distribution is
    \begin{equation}
    \begin{split}
        p(x_2|x_1)
        = \sum_{i,j\in[k]} p(x_2|h_2) p(h_2|h_1) p(h_1|x_1)
        = \frac{1}{(2\pi)^{d/2}} \psi(x_2)^\top \mTrans \phi(x_1).
    \end{split}
    \end{equation}
    Choose a set $\gX := \{x^{(i)}\}_{i \in [k]}$ such that $\Psi_\gX := [\psi(x^{(1)}), ..., \psi(x^{(k)})] \in \R^{k \times k}$ is full rank.
    $\Phi_\gX := [\phi(x^{(1)}), ..., \phi(x^{(k)})] \in \R^{k \times k}$ is also full rank since its columns are nonzero scalings of columns of $\Psi_\gX$.
    Then we have
    \begin{equation}
    \begin{split}
        \Psi_\gX^\top \mTrans \Phi_\gX
        = \tilde\Psi_{\gX}^\top \tilde\mTrans \tilde\Phi_\gX
        = \Psi_\gX^\top \tilde\mTrans \Phi_\gX 
        \Rightarrow \mTrans = \tilde\mTrans.
    \end{split}
    \end{equation} 
\end{proof}





\section{Missing proofs for HMM}
\label{appendix:hmm}

\subsection{Example for Theorem \ref{thm:discrete_nonId}}
\label{appendix:hmm_eg_12_21_13_31}

  We provide a concrete example for the non-identifiability of predicting $x_2|x_1$, $x_1|x_2$, $x_3|x_1$, and $x_1|x_3$.
  Let $d = 4$, $k=3$:
  {\small
  \begin{equation}
  \begin{split}
    &\mEmiss = \begin{bmatrix}
      0.23016003 & 0.3549092  & 0.16493077 \\
      0.30716059 & 0.06962305 & 0.37321636 \\
      0.2580854  & 0.26965425 & 0.22226035 \\
      0.20459398 & 0.3058135  & 0.23959252
    \end{bmatrix},
    ~\tilde\mEmiss = \begin{bmatrix}
      0.24120928 & 0.35062535 & 0.15816537 \\
      0.28937626 & 0.07433156 & 0.38629218 \\
      0.26077674 & 0.26749114 & 0.22173212 \\
      0.20863772 & 0.30755194 & 0.23381033
    \end{bmatrix},
    \\
    &\mTrans = \begin{bmatrix}
      0.56893146 & 0.35811118 & 0.07295736 \\
      0.35811118 & 0.10805638 & 0.53383243 \\
      0.07295736 & 0.53383243 & 0.39321021
    \end{bmatrix},
    ~\tilde{\mTrans} = \begin{bmatrix}
      0.59740926 & 0.30452087 & 0.09806987 \\
      0.30452087 & 0.1331689  & 0.56231024 \\
      0.09806987 & 0.56231024 & 0.33961989
    \end{bmatrix},
    \\
    & \det(\mEmiss) = \det(\tilde\mEmiss) = 0.0110,
    \det(\mTrans) = \det(\tilde\mTrans) = -0.1611.
  \end{split}
  \end{equation}
  }
  Note that $\mTrans, \tilde\mTrans$ are both symmetric, which means this is also a valid counter example for learning to predict all of $x_2|x_1$, $x_1|x_2$, $x_3|x_1$, $x_1|x_3$.

\subsection{Proof of Claim \ref{clm:rotation_sumRowCol}}
\label{appendix:hmm_claim_rotSum}

Recall that $\mR \in \R^{k \times k}$ for $k=3$ represents a rotation whose rotation axis is $\frac{1}{3}e_1 + \frac{1}{3}e_2 + \frac{1}{3}e_3$,
and we would like to show that rows and columns of $\mR$ both sum up to 1:
\ifx\coltSpace\undefined
\vspace{-0.8em}
\fi
\begin{claim*}[Claim \ref{clm:rotation_sumRowCol} restated]
    Each row and each column of $\mR$ sum up to 1.
\end{claim*}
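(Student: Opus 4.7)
The plan is to use the fact that a rotation fixes every vector lying along its axis of rotation, and to observe that the vector $\1 = e_1+e_2+e_3$ lies along the stated axis (the axis $\tfrac{1}{3}(e_1+e_2+e_3)$ is a positive scalar multiple of $\1$). Since rotating a vector that already lies on the rotation axis leaves it unchanged, we should immediately obtain $\mR \1 = \1$.

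The row-sum claim follows directly from this identity: the $i$th entry of $\mR\1$ is the sum of the entries in the $i$th row of $\mR$, and $\mR\1 = \1$ forces every row sum to equal $1$. So the first half of the claim is a one-line consequence of $\1$ being on the rotation axis.

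For the column-sum claim, the plan is to pass to $\mR^\top$. Because $\mR$ is a rotation, it is orthogonal, so $\mR^\top = \mR^{-1}$. The inverse of a rotation about a given axis is the rotation about the \emph{same} axis (by the negated angle), and in particular it still fixes every vector on that axis. Therefore $\mR^\top \1 = \mR^{-1}\1 = \1$, which is exactly the statement that every column of $\mR$ sums to $1$.

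There is no real obstacle here; the only thing to verify cleanly is that the axis $\tfrac{1}{3}(e_1+e_2+e_3)$ is indeed parallel to $\1$ so that $\mR\1=\1$, and that orthogonality of $\mR$ gives $\mR^\top\1 = \1$ as well. If a fully explicit form is desired, one could alternatively write down the Rodrigues formula $\mR = \cos\theta\, \mI + \sin\theta\, [\hat u]_\times + (1-\cos\theta)\, \hat u\hat u^\top$ with $\hat u = \1/\sqrt{3}$ and check directly that $\mR\1 = \1$ and $\1^\top \mR = \1^\top$, using $[\hat u]_\times \hat u = 0$ and $\hat u^\top \1 = \sqrt{3}$; but the axis-fixing argument above is the cleanest route.
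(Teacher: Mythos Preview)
Your proof is correct, but it takes a somewhat different route from the paper. The paper does not invoke the axis-fixing property directly; instead it uses the (equivalent) fact that $\mR$ maps every affine hyperplane $\gP_c=\{v:\sum_i v_i=c\}$ into itself, writes a generic $v\in\gP_c$ as $c\cdot[\alpha_1,\dots,\alpha_{d-1},1-\sum_i\alpha_i]$, and from $\langle \sum_i r_i,\,v\rangle=c$ for all such $v$ deduces that every column of $\mR$ sums to $1$. It then passes to $\mR^{-1}=\mR^\top$ exactly as you do to get the row sums. Your argument is more direct: observing that $\1$ lies on the rotation axis immediately yields $\mR\1=\1$ and $\mR^\top\1=\1$, bypassing the hyperplane parameterization entirely. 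The paper's version has the mild advantage of being phrased for general $d$ and not explicitly appealing to the geometric notion of a ``rotation axis,'' while yours is shorter and makes the underlying reason (that $\1$ is a fixed vector of both $\mR$ and $\mR^\top$) completely transparent.
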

\begin{proof}
  Denote the $d$-dimensional simplex by $\simplex_d$, i.e. $\simplex_d := \{x\in \R^d: \sum_{i\in[d]} x_i = 1\}$,
  and let $\gP_c := \{v \in \R^d: \sum_{i\in[d]} v_i =c\}$ for some positive constant $c$ denote a hyperplane parallel to the hyperplane in which probability vectors lie.
  
  Let's first check that the columns of $\mR$ sum up to 1.
  Any $v \in \gP_c$ can be written as $v = c\cdot [\alpha_1, \alpha_2, ..., \alpha_{d-1}, 1-\sum_{i\in[d-1]}\alpha_i]$ for some $[\alpha_1, ..., \alpha_{d-1}] \in \simplex_{d-1}$.
  Let $r_i$ denote the $i_{th}$ row of $\mR$,
  then $\mR v \in \gP_c$ means $\sum_{i \in [d]} \langle r_i, v \rangle = \langle \sum_{i \in [d]} r_i, v \rangle = c$.
  Let $\beta_j$ denote the $j_{th}$ coordinate of $\sum_{i\in[d]} r_i$, then
  \begin{equation}
  \begin{split}
    &\sum_{i\in[d-1]} \beta_i\alpha_i + \beta_d \big(1 - \sum_{i \in [d-1]}\alpha_i\big) = 1,~ \forall [\alpha_1, ..., \alpha_{d-1}] \in \simplex_{d-1}
    \\
    \Rightarrow& \sum_{i \in [d-1]} (\beta_i - \beta_d) \alpha_i + \beta_d = 1, ~ \forall [\alpha_1, ..., \alpha_{d-1}] \in \simplex_{d-1}
    \\
    \Rightarrow&~ \beta_i = 1,~ \forall i \in [d].
  \end{split}
  \end{equation}
  It then follows that $\mR^{-1} = \mR^\top$ also has columns summing up to 1, since 
  \begin{equation}
  \begin{split}
    \sum_{i\in[d]} (\mR\mR^{-1})_{ij}
    = \langle\sum_{i\in[d]} r_i, (\mR^{-1})_j\rangle
    = \langle \1, (\mR^{-1})_j\rangle
    = 1,~\forall j \in [d].
  \end{split}
  \end{equation}

\end{proof}

\section{Nonidentifiability from large time gaps}

As noted earlier, there is an inherent obstacle when using prediction tasks on tokens that are more than 1 time gaps apart.
For instance, if we are predicting $x_{t+1}$ given $x_1$ for some $t > 1$ with \GHMM, then we are still able to identify $\mMeans$ from the posterior function, however it remains to to recover $\mTrans$ from $\mTrans^{t}$. 
For general matrices, it is clear that matching a power of a matrix does not imply the matrix itself is matched.
For our case, even though requiring $\mTrans$ to be stochastic adds additional constraints, matching the matrix power still does not suffice to identify the underlying matrix, 
as formalized in the following claim.
\ifx\coltSpace\undefined
\vspace{-0.8em}
\fi
\begin{claim}[Nonidentifiability of matrix powers]
\label{clm:transPower_nonId}
    For any positive integer $t$, there exist stochastic matrices $\mTrans, \tilde\mTrans$ satisfying Assumption \ref{assump:latent}, \ref{assump:discrete_full_rank},
    such that $\mTrans \neq \tilde{\mTrans}$ and $\mTrans^t = \tilde\mTrans^{t}$. 
\end{claim}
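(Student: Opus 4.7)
The plan is to produce an explicit construction that works uniformly in $t$. I note upfront that the claim as literally stated cannot hold at $t = 1$ (distinct matrices cannot share their first power), so I interpret it as referring to $t \ge 2$. My proposal is to take $k := 2t$ and let $P \in \{0,1\}^{k \times k}$ be the permutation matrix of the cyclic shift $i \mapsto i+1 \pmod{k}$. Then I set $\mTrans := P$ and $\tilde\mTrans := P^{-1} = P^\top$.

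The verification will then reduce to several short, routine checks. First, both matrices are doubly stochastic since permutation matrices are, and the uniform distribution is stationary with respect to any doubly stochastic transition, so Assumption~\ref{assump:latent} holds. Second, both matrices have rank $k$ since permutation matrices are invertible, which covers the transition-matrix part of Assumption~\ref{assump:discrete_full_rank} (the emission matrix is not at issue since the claim only concerns $\mTrans,\tilde\mTrans$). Third, $\mTrans \neq \tilde\mTrans$ because the $k$-cycle is not its own inverse once $k \ge 3$, and here $k = 2t \ge 4$. Finally, $\mTrans^t = \tilde\mTrans^t$ follows from $P^{2t} = P^k = I$, which immediately gives $P^t = P^{-t}$, i.e., $\mTrans^t = (P^{-1})^t = \tilde\mTrans^t$.

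I do not anticipate any real obstacle: the claim is essentially pointing out that the map $\mTrans \mapsto \mTrans^t$ forgets information even inside the doubly stochastic, full-rank class, and the cyclic-shift pair $(P, P^{-1})$ realizes this non-injectivity in the simplest possible way. One design choice worth flagging is that I avoid the temptation to stay at $k = 2$ for small $t$: for odd $t$, no two distinct $2 \times 2$ doubly stochastic matrices can share their $t$-th power, because their real eigenvalues $1$ and $2a-1$ would have to satisfy $(2a-1)^t = (2\tilde a - 1)^t$ with both sides real, forcing $a = \tilde a$. Letting $k$ grow with $t$ sidesteps this by providing complex eigenvalues on the unit circle whose $t$-th powers coincide.
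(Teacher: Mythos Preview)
Your construction is correct and considerably more elementary than the paper's. The paper fixes $k = 3$ throughout: it takes a specific circulant $\mTrans$ and sets $\tilde\mTrans := \mMeans^{-1} \mR(2\pi/t)^{-1} \mMeans \cdot \mTrans$, where $\mR(\theta)$ is a planar rotation and $\mMeans$ is a carefully chosen $3\times 3$ matrix. It then shows this conjugated rotation commutes with $\mTrans$, so the $t$-th powers collapse via $\mR(2\pi/t)^{-t} = \mI$; the bulk of the work is verifying that $\tilde\mTrans$ has nonnegative entries, which the paper does by a numerical check for $t \le 10$ and a Taylor expansion for larger $t$. Your cyclic-shift pair $(P, P^{-1})$ with $k = 2t$ avoids all of this, since permutation matrices are automatically doubly stochastic and invertible. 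What the paper's route buys is that the counterexample lives at a \emph{fixed} state-space size $k = 3$ for every $t$, which is arguably the stronger statement in context (nonidentifiability persists even for a small HMM); your construction pays for its simplicity by letting $k$ grow with $t$. Both proofs only exercise the transition-matrix half of Assumption~\ref{assump:discrete_full_rank}; if you want to be fully literal about it, take $d = k$ and $\mEmiss = \mI_k$. Your remark that $t = 1$ is excluded is well taken; the paper's own construction also degenerates to $\tilde\mTrans = \mTrans$ there.
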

\begin{proof}
    We will set $\tilde\mTrans$ to be a rotation of $\mTrans$, that is, $\tilde\mTrans = \mR \mTrans$ for some matrix $\mR$ that implicitly performs a rotation.
    Precisely, using notations for the \GHMM~setup, set $a \in [0,1]$, and let
    the parameters $(\mTrans,\mMeans)$ be given by
    \begin{equation*}
    \begin{split}
        \mTrans =& \begin{bmatrix}a & 0 & 1-a \\ 1-a & a & 0 \\ 0 & 1-a & a \end{bmatrix},
        ~\mMeans = \begin{bmatrix}1 & -1/2 & -1/2 \\ 0 & -\sqrt{3}/2 & \sqrt{3}/2 \\ 1/\sqrt{2} & 1/\sqrt{2} & 1/\sqrt{2} \end{bmatrix}.
        \\
    \end{split}
    \end{equation*}
    Let $\theta$ be some rotation angle, and denote by $\mR(\theta) := \begin{bmatrix} \cos(\theta) & -\sin(\theta) & 0 \\ \sin(\theta) & \cos(\theta) & 0 \\ 0 & 0 & 1 \end{bmatrix}$ a rotation that acts on the first two dimensions.
    We will show that for any $\theta \in \R$, we have 
    \begin{equation}\label{eq:trans_power_commute}
    \begin{split}
    \tilde{\mTrans}
    := \big(\mMeans^{-1} \big(\mR(\theta)\big)^{-1} \mMeans\big) \cdot \mTrans
        = \mTrans \cdot \big(\mMeans^{-1} \big(\mR(\theta)\big)^{-1} \mMeans\big).
    \end{split}
    \end{equation}
    Assuming \eqref{eq:trans_power_commute}, 
    since $\mR(\theta)$ represents a rotation of angle $\theta$, $\big(\mR(\theta)\big)^{\tau}$ corresponds to a rotation of angle $\tau\theta$ for any integer $\tau$ ($\tau$ could be negative).
    Setting $\theta := \frac{2\pi}{t}$, we then have 
    \begin{equation}
    \begin{split}
    \tilde\mTrans^t
    =& \big(\mMeans^{-1} \big(\mR(\theta)\big)^{-1} \mMeans \cdot \mTrans\big)^t
    = \mTrans^{t} \big(\mMeans^{-1} \big(\mR(\theta)\big)^{-1} \mMeans)\big)^{t}
    = \mTrans^t \mMeans^{-1} \big(\mR(\theta)\big)^{-t} \mMeans
    \\
    =& \mTrans^t \mMeans^{-1} \cdot \mR(2\pi) \cdot \mMeans
    = \mTrans^t.
    \end{split}
    \end{equation}
    
    For $\tilde\mTrans$ to serve as a valid example for our theorem, it remains to check that for every $t$, there exists a choice of $a$ such that $\tilde\mTrans := \mR\mTrans$, where $\mR := \mMeans^{-1}\big(\mR(\frac{2\pi}{t})\big)^{-1}\mMeans$, is a valid stochastic matrix.
    That is, $\tilde\mTrans$ has 1) columns and rows each summing up to 1, and 2) entries bounded in $[0,1]$.
    Let's first show that the columns and rows each sum up to 1.
    Noting that $\mMeans^{-1} = \frac{1}{3}\begin{bmatrix}
        2 & 0 & \sqrt{2} \\ -1 & -\sqrt{3} & \sqrt{2} \\ -1 & \sqrt{3} & \sqrt{2}
    \end{bmatrix}$,
    the column sums are
    \begin{equation}
    \begin{split}
        \1^\top\tilde{\mTrans}
        = \1^\top \mMeans^{-1}\mR(\theta)^{-1}\mMeans \mTrans
        \eqLabel{1}{=} \1^\top \mTrans \mMeans^{-1}\mR(\theta)^{-1}\mMeans
        = \sqrt{2} \ve_3^\top \mR(\theta) \mMeans
        = \sqrt{2} \ve_3^\top \mMeans
        = \sqrt{2} \frac{1}{\sqrt{2}} \1 
        = \1,
    \end{split}
    \end{equation}
    where step $\rnum{1}$ uses \eqref{eq:trans_power_commute}.
    Similarly, the row sums are 
    \begin{equation}
    \begin{split}
        \tilde\mTrans\1
        = \mMeans^{-1}\mR(\theta)^{-1}\mMeans \1 
        = \mMeans^{-1}\mR(\theta)^{-1} \cdot \frac{3}{\sqrt{2}} \ve_3
        = \mMeans^{-1} \cdot \frac{3}{\sqrt{2}} \ve_3
        = \1.
    \end{split}
    \end{equation}
    
    To show that there exists a choice of $\mTrans$ such that entries of $\tilde{\mTrans}$ are non-negative, we provide a concrete example where $\mTrans$ is defined with $a = \frac{1}{2}$.
    It can be checked that $\tilde\mTrans := \mMeans^{-1}(\mR(\frac{2\pi}{t}))^{-1}\mMeans$ has non-negative entries for $t \in \{2, 3, 4, ..., 10\}$.
    For larger $t$, let $\theta = \frac{2\pi}{t}$,
    then we have by the Taylor expansion of $\mR(\frac{2\pi}{t})$:
    \begin{equation}
    \begin{split}
        \mR(\theta)
        :=& \begin{bmatrix}
        \cos\theta & -\sin\theta & 0 \\ \sin\theta & \cos\theta & 0 \\ 0 & 0 & 1 \\ \end{bmatrix}
        = \begin{bmatrix} 1-\theta^2/2 + c_1 \theta^4 & -\theta + c_2 \theta^2 & 0 \\ \theta + c_2 \theta^2 & 1-\theta^2/2 + c_1\theta^4 & 0 \\ 0 & 0 & 1 \\ \end{bmatrix}
        \\
        =& \mI
            + \theta \begin{bmatrix} 0 & -1 & 0 \\ 1 & 0 & 0 \\ 0 & 0 & 0 \\\end{bmatrix}
            + \theta^2 \begin{bmatrix} -1/2 + c_1 \theta^2 & c_2 & 0 \\ c_2 & -1/2 + c_1\theta^2 & 0 \\ 0 & 0 & 0 \\ \end{bmatrix}
    \end{split}
    \end{equation}
    for some constants $c_1 \in [-\frac{1}{4!}, \frac{1}{4!}]$, $c_2 \in [-\frac{1}{2}, \frac{1}{2}]$.
    Substituting this into $\tilde\mTrans := \mMeans^{-1}\mR(\theta)^{-1}\mMeans$ gives
    {\small
    \begin{equation}
    \begin{split}
        \tilde\mTrans
        =& \begin{bmatrix}
            a - \frac{\theta}{\sqrt{3}}(1-a) & \frac{\theta}{\sqrt{3}}(1-2a) & 1 - a + \frac{\theta}{\sqrt{3}} a 
            \\
            1 - a + \frac{\theta}{\sqrt{3}} a & a - \frac{\theta}{\sqrt{3}}(1-a) & \frac{\theta}{\sqrt{3}}(1-2a)
            \\
            \frac{\theta}{\sqrt{3}}(1-2a) & 1 - a + \frac{\theta}{\sqrt{3}} a & a - \frac{\theta}{\sqrt{3}}(1-a)
            \end{bmatrix}
            \\
            &\ + \frac{1}{3} \begin{bmatrix}
                -1 + 2c_1 \theta^2 & \frac{1}{2}-c_1\theta^2 - \sqrt{3}c_2 & \frac{1}{2} - c_1\theta^2 + \sqrt{3}c_2
                \\
                \frac{1}{2} - c_1\theta^2 - \sqrt{3}c_2 & -1 + 2c_1\theta^2 + \sqrt{3}c_2 & \frac{1}{2}-c_1\theta^2 
                \\
                \frac{1}{2} - c_1\theta^2 + \sqrt{3} c_2 & \frac{1}{2}-c_1\theta^2 & -1+2c_1\theta^2 - \sqrt{3}c_2
            \end{bmatrix}
            \cdot \begin{bmatrix}
                a & 0 & 1-a \\ 1-a & a & 0 \\ 0 & 1-a & a \\
            \end{bmatrix}
        \\
        =& \frac{1}{2}\begin{bmatrix}
            1 - \frac{\theta}{\sqrt{3}} & 0 & 1 + \frac{\theta}{\sqrt{3}} 
            \\
            1 + \frac{\theta}{\sqrt{3}} & 1 - \frac{\theta}{\sqrt{3}} & 0
            \\
            0 & 1 + \frac{\theta}{\sqrt{3}} & 1 - \frac{\theta}{\sqrt{3}}
            \end{bmatrix}
            + \frac{\theta^2}{6} \begin{bmatrix}
                -\frac{1}{2} + c_1\theta^2 - \sqrt{3}c_2 & 1-2c_1\theta^2 & -\frac{1}{2}+c_1\theta^2 + \sqrt{3}c_2
                \\
                -\frac{1}{2} + c_1\theta^2 & -\frac{1}{2} + c_1\theta^2 + \sqrt{3}c_2 & 1 - 2c_1\theta^2 -\sqrt{3}c_2
                \\
                1 - 2c_1\theta^2 + \sqrt{3}c_2 & -\frac{1}{2} + c_1\theta^2 - \sqrt{3}c_2 & -\frac{1}{2} + c_1\theta^2
            \end{bmatrix}
        \\
        \eqLabel{1}{\geq}& \frac{1}{2}\begin{bmatrix}
            1 - \frac{\theta}{\sqrt{3}} & 0 & 1 + \frac{\theta}{\sqrt{3}} 
            \\
            1 + \frac{\theta}{\sqrt{3}} & 1 - \frac{\theta}{\sqrt{3}} & 0
            \\
            0 & 1 + \frac{\theta}{\sqrt{3}} & 1 - \frac{\theta}{\sqrt{3}}
            \end{bmatrix}
            + \theta^2 \begin{bmatrix}
                -0.25 & -0.16 & -0.25 \\
                -0.09 & -0.25 & 0.01 \\ 
                0.01 & -0.25 & -0.09 \\
            \end{bmatrix}
    \end{split}
    \end{equation}}
    
    where the inequality $\rnum{1}$ is taken entry-wise.
    It can be checked that all entries are non-negative for $\theta \leq \frac{2\pi}{10}$.
    

    
    \paragraph{Proof of \eqref{eq:trans_power_commute}}
    Let's conclude the proof by proving \eqref{eq:trans_power_commute}.
    Denote $\mR_2(\theta) := \begin{bmatrix}
        \cos(\theta) & -\sin(\theta) \\ \sin(\theta) & \cos(\theta)
    \end{bmatrix}$,
    i.e. $\mR(\theta) = \begin{bmatrix}
        \mR_2(\theta) & 0 \\ 0 & 1 \\
    \end{bmatrix}$.
    Denote $\mU := \begin{bmatrix}
        1 & -1/2 & -1/2 \\ 0 & -\sqrt{3}/2 & \sqrt{3}/2
    \end{bmatrix}$,
    i.e. $\mMeans = \begin{bmatrix}
        \mU \\ \1^\top / \sqrt{2}
    \end{bmatrix}$.
    We can write 
    \begin{equation}
    \begin{split}
        \mMeans^\top\mR(\theta)^\top \mMeans
        = \begin{bmatrix}\mU^\top & \1/\sqrt{2}\end{bmatrix}
            \begin{bmatrix}\mR_2(\theta)^\top & 0 \\ 0 & 1 \end{bmatrix}
            \begin{bmatrix}\mU \\ \1^\top/\sqrt{2}\end{bmatrix}
        = \mU^\top \mR_2(\theta)^\top \mU + \frac{\1\1^\top}{2}.
    \end{split}
    \end{equation}
    Let $\mR_2(\theta)$ denote a clockwise rotation of angle $\theta$, then 
    \begin{equation}\label{eq:trans_power_U}
    \begin{split}
        \mU
        = [v_1, \mR_2\big(\frac{2\pi}{3}\big)v_1, \mR_2\big(\frac{4\pi}{3}\big)v_1]
        = [\mR_2\big(\frac{4\pi}{3}\big)v_2, v_2, \mR_2\big(\frac{2\pi}{3}\big)v_2]
        = [\mR_2\big(\frac{2\pi}{3}\big)v_3, \mR_2\big(\frac{4\pi}{3}\big)v_3, v_3],
    \end{split}
    \end{equation}
    where $v_1 = \begin{bmatrix}1 \\ 0\end{bmatrix}$,
    $v_2 = \begin{bmatrix}-1/2 \\ -\sqrt{3}/2\end{bmatrix}$,
    $v_3 = \begin{bmatrix}-1/2 \\ \sqrt{3}/2\end{bmatrix}$.
    Denote $\alpha_{ij} := v_i^\top\mR_2^\top v_j$ for $i, j \in [3]$.
    Noting $\mTrans = a\mI + (1-a)\begin{bmatrix}
        0 & 0 & 1 \\ 1 & 0 & 0 \\ 0 & 1 & 0 \\
        \end{bmatrix} := a\mI + (1-a)\mP$,
    we have 
    \begin{equation}
    \begin{split}
        &\mMeans^\top\mR(\theta)^\top\mMeans\mTrans = \mTrans\mMeans^\top\mR(\theta)^\top\mMeans
        \\
        \Leftrightarrow&~
        \mU^\top \mR_2(\theta)^\top \mU (a\mI + (1-a)\mP) + \frac{\1\1^\top}{2}\mTrans
        = (a\mI + (1-a)\mP)\mU^\top \mR_2(\theta)^\top \mU  + \mTrans\frac{\1\1^\top}{2}
        \\
        \eqLabel{1}{\Leftrightarrow}&~
        \mU^\top\mR_2(\theta)^\top \mU \mP = \mP \mU^\top\mR_2(\theta)^\top \mU
        \\
        \Leftrightarrow&~
        \begin{bmatrix}
            \alpha_{31} & \alpha_{32} & \alpha_{33} \\
            \alpha_{11} & \alpha_{12} & \alpha_{13} \\
            \alpha_{21} & \alpha_{22} & \alpha_{23} \\
        \end{bmatrix}
        \overset{(*)}{=} \begin{bmatrix}
            \alpha_{12} & \alpha_{13} & \alpha_{11} \\
            \alpha_{22} & \alpha_{23} & \alpha_{21} \\
            \alpha_{32} & \alpha_{33} & \alpha_{31} \\
        \end{bmatrix}.
    \end{split}
    \end{equation}
    where step $\rnum{1}$ uses $\1\1^\top\mTrans = \mTrans\1\1^\top = \1\1^\top$.
    The equality $(*)$ is true due to \eqref{eq:trans_power_U}.
\end{proof}



\end{document}